\def\eg{\textit{e.g.}}
\def\ie{\textit{i.e.}}
\newcommand{\PreserveBackslash}[1]{\let\temp=\\#1\let\\=\temp}
\newcolumntype{C}[1]{>{\PreserveBackslash\centering}p{#1}}
\newcolumntype{R}[1]{>{\PreserveBackslash\raggedleft}p{#1}}
\newcolumntype{L}[1]{>{\PreserveBackslash\raggedright}p{#1}}
\def\eg{\textit{e.g.}}
\def\ie{\textit{i.e.}}
\definecolor{battleshipgrey}{rgb}{0.52, 0.52, 0.51}
\definecolor{capri}{rgb}{0.0, 0.75, 1.0}
\definecolor{mediumspringgreen}{rgb}{0.0, 0.98, 0.6}
\definecolor{Gray}{rgb}{0.7,0.7,0.7}
\newcommand{\fullmodelname}{preconditioned diffusion sampling}
\newcommand{\shortmodelname}{PDS}
\journalname{International Journal of Computer Vision}
\begin{document}

\title{Preconditioned Score-based Generative Models
}


\author{
	Hengyuan Ma$^1$ \and
	Xiatian Zhu$^2$ \and
        Jianfeng Feng$^1$ \and
        Li Zhang$^1$
}




\institute{
	Corresponding author: Li Zhang  \at
             \email{lizhangfd@fudan.edu.cn}          \\
$^1$Fudan University, Shanghai, China \\
$^2$University of Surrey, Guildford, UK \\
H. Ma and J. Feng are with Institute of Science and Technology for Brain-inspired Intelligence, Fudan University.\\
L. Zhang is with School of Data Science, Fudan University.
}
\date{27th Feb 2025}

\maketitle

\begin{abstract}
Score-based generative models (SGMs) have recently emerged as a promising class of generative models.
However, a fundamental limitation is that their sampling process is slow due to a need for many (\eg, $2000$) iterations of sequential computations.
An intuitive acceleration method is to reduce the sampling iterations which however causes severe performance degradation.
We assault this problem to the ill-conditioned issues of the Langevin dynamics and reverse diffusion in the sampling process.
Under this insight, we propose a novel {\bf\em preconditioned diffusion sampling} (PDS) method that leverages matrix preconditioning to alleviate the aforementioned problem.
PDS alters the sampling process of a vanilla SGM at marginal extra computation cost and without model retraining.
Theoretically, we prove that PDS preserves the output distribution of the SGM, with no risk of inducing systematical bias to the original sampling process.
We further theoretically reveal a relation between the parameter of PDS and the sampling iterations, 
easing the parameter estimation under varying sampling iterations.
Extensive experiments on various image datasets with a variety of resolutions and diversity validate that our PDS consistently accelerates off-the-shelf SGMs whilst maintaining the synthesis quality.
In particular, PDS can accelerate by up to $28\times$ on more challenging high-resolution (1024$\times$1024) image generation. 
Compared with the latest generative models (\eg, CLD-SGM and Analytic-DDIM), PDS can achieve the best sampling quality on CIFAR-10 at an FID score of 1.99.
Our code is publicly available to foster any further research \url{https://github.com/fudan-zvg/PDS}.
\keywords{Image synthesis, Score-based generative models, Preconditioning}
\end{abstract}
\section{Introduction}
As an alternative approach to generative adversarial networks (GANs)~\citep{goodfellow2014generative}, recent score-based generative models (SGMs)~\citep{yang2023diffusion}
have demonstrated excellent abilities in data synthesis (especially in high-resolution images) with easier optimization \citep{song2019generative}, richer diversity \citep{xiao2021tackling}, and more solid {theoretical} foundation \citep{de2021diffusion}. 

Based on these progresses, SGMs and related models show great potential in various application\citep{yang2022diffusion,yang2023diffusion} including high fidelity audio streams generation \citep{DBLP:conf/iclr/KongPHZC21,DBLP:conf/iclr/ChenZZWNC21}, text-to-image generation \citep{nichol2021glide,saharia2022photorealistic,rombach2022high}, photo-realistic images editing \citep{meng2021sdedit,saharia2021palette,saharia2021image,kawar2022denoising}, video~\citep{ho2022video} generation, 3D shape generation~\citep{zhou20213d}, and scene graphs generation~\citep{yang2022diffusion}.
Starting from a sample initialized with a Gaussian distribution, {an SGM} produces a target sample by simulating a sampling process, typically a reverse diffusion, Langevin dynamics, or their combination (Alg.~\ref{alg:original_sample}).
Compared to state-of-the-art GANs \citep{DBLP:conf/iclr/BrockDS19,karras2019style,DBLP:conf/iclr/KarrasALL18}, a significant drawback {of} existing SGMs is {\em drastically slower generation} due to the need to take many iterations for a sequential sampling process \citep{song2020score,nichol2021glide,xiao2021tackling}. 

Formally, the discrete Langevin dynamic for sampling is typically formulated as
\begin{align}\label{eq: Langevin_discrete}
    \mathbf{x}_{t} = \mathbf{x}_{t-1} +  \frac{\epsilon_t^2}{2}\bigtriangledown_{\mathbf{x}}\log p_{\mathbf{x}^\ast}(\mathbf{x}_{t-1})  + \epsilon_t \mathbf{z}_t, 1\leq t \leq T
\end{align}
where $\epsilon_t$ is the step size (a positive real scalar), $\mathbf{z}_t$ is a standard Gaussian noise, and $T$ is the iterations. Starting from a standard Gaussian sample $\mathbf{x}_0$, with a total of $T$ steps this sequential sampling process gradually transforms $\mathbf{x}_0$ to the sample $\mathbf{x}_T$ that obeys the target distribution $p_{\mathbf{x}^{\ast}}$, which is the steady-state distribution of the process.
The discrete reverse diffusion for sampling is the reverse of a corrupting process starting from the data distribution $p_{\mathbf{x}^{\ast}}$
\begin{align}
    \mathbf{x}_t= \mathbf{x}_{t-1} +g_{t}\mathbf{z}_t,1\leq t \leq T,
\end{align}
where $g_{t}$ is the step size.
Typically, the discrete reverse diffusion is formulated as
\begin{align}\label{eq:discrete_reverse_diffusion}
\mathbf{x}_t= \mathbf{x}_{t-1} + \bar{g}_{t}^2 \bigtriangledown_{\mathbf{x}}\log \bar{p}_{t}(\mathbf{x}) +\bar{g}_{t}\mathbf{z}_t,1\leq t \leq T,
\end{align}
where $\bar{g}_{t}=g_{T-t}$, and $\bar{p}_{t}=p_{T-t}$ is the probabilistic distribution function of the state $\mathbf{x}_{T-t}$ in the corrupting process at the time point $t$.

Since the evaluation of the gradient terms
$\bigtriangledown_{\mathbf{x}}\log p_{\mathbf{x}^{\ast}}(\mathbf{x})$ (or $\bigtriangledown_{\mathbf{x}}\log \bar{p}_{t}(\mathbf{x})$) consumes the major time in the sampling process, for accelerating the sampling process, a straightforward way is to reduce the iterations $T$ and proportionally expand the step size $\epsilon_t$ simultaneously, so that the number of calculating the gradient terms decreases, whilst keeping the total update magnitude.
However, this often makes pretrained SGMs fail in image synthesis, as shown in Fig.~\ref{fig:ffhq}.

Conceptually, the Langevin dynamics defined in Eq.~\eqref{eq: Langevin_discrete} can be considered as a special case of Metropolis adjusted Langevin algorithm (MALA)~\citep{roberts2002langevin,welling2011bayesian,girolami2011riemann}.
When the coordinates of the target dataset (\eg, the pixel locations of a natural image) have widely varying scales or are strongly correlated, the sampling process becomes {\em ill-conditioned}. 
As a result, the convergence of the sampling process is not guaranteed when the step sizes are large~\citep{girolami2011riemann}. This explains the failure of sampling under direct acceleration (Fig.~\ref{fig:ffhq}).
{Noting} that the reverse diffusion has a similar update rule {to} the Langevin dynamics (Eq.~\eqref{eq: Langevin_discrete} vs. Eq.~\eqref{eq:discrete_reverse_diffusion}), we hypothesize that it suffers {from} the same ill-conditioned issue. 

\begin{figure*}[t]
    \centering
    \begin{minipage}{1\linewidth}
        \textcolor{white}{-----------------}$T = 2000$\textcolor{white}{---------------------}$T = 200$\textcolor{white}{---------------------}$T = 133$\textcolor{white}{---------------------}$T = 100$\textcolor{white}{---------------------}$T = 66$
    \end{minipage}

    \rotatebox{90}{\textcolor{white}{------------------} \quad\quad \quad  Ours \quad\quad\quad\quad\quad\quad\quad\quad \quad\quad\quad\quad\quad\quad  Baseline~\cite{song2020score}} 
    \includegraphics[width=0.975\textwidth]{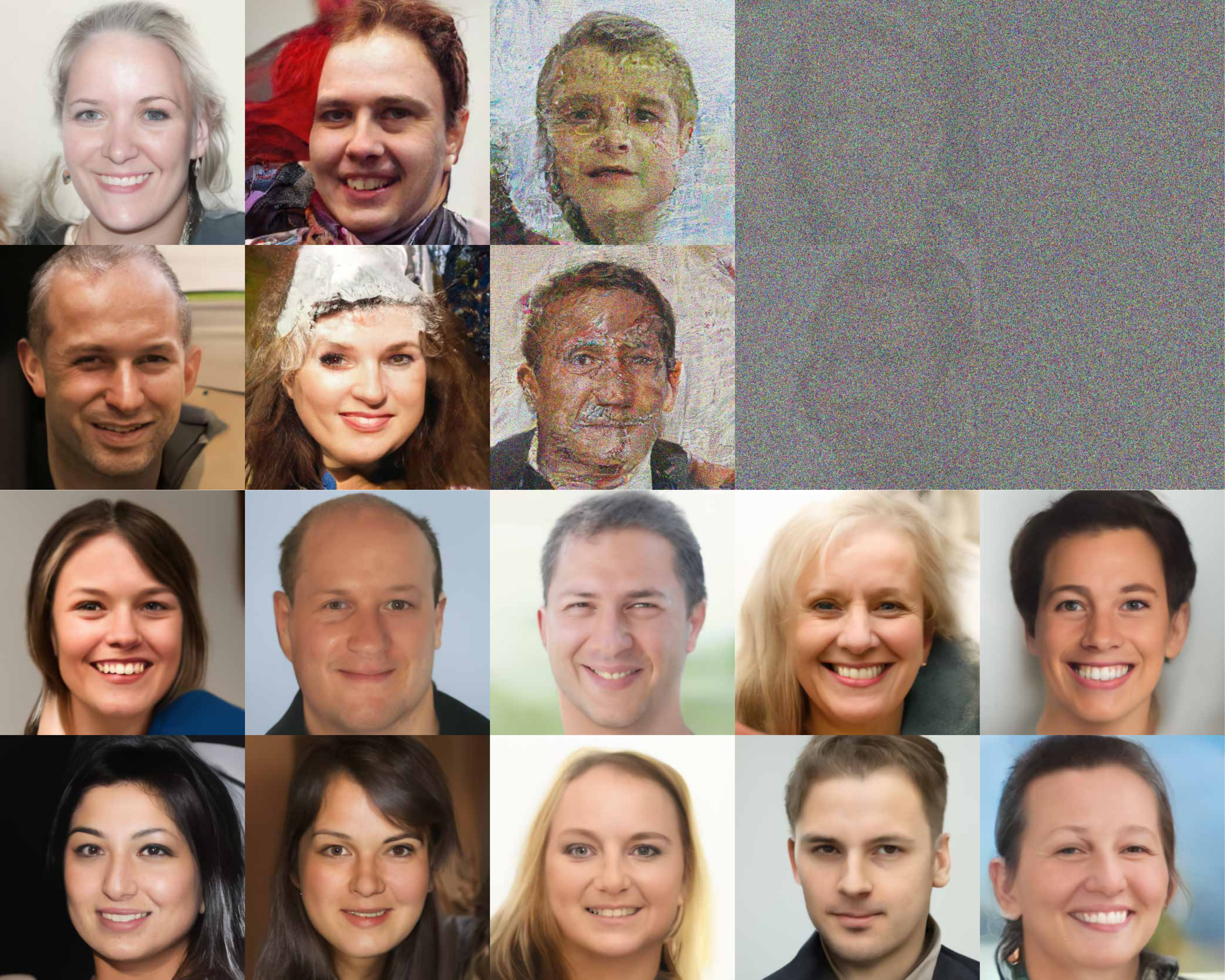}
    \caption{
        Facial images at a resolution of $1024\times 1024$ generated by NCSN++~\citep{song2020score}
        under a variety of sampling iterations
        (top) without and (bottom) with our PDS. 
        NCSN++ decades quickly with increasingly reduced sampling iterations, which can be well solved with PDS.
        For generating a batch of 8 images, PDS reduces the time cost of generating $8$ human facial samples from $1920$ sec (more than half an hours) to $68$ sec (about one minute) on one NVIDIA RTX 3090 GPU. Dataset: FFHQ \citep{karras2019style}. More samples in Appendix. }
        \label{fig:ffhq}
\end{figure*}

\begin{figure}[h]
    \centering
    \includegraphics[width=0.48\textwidth]{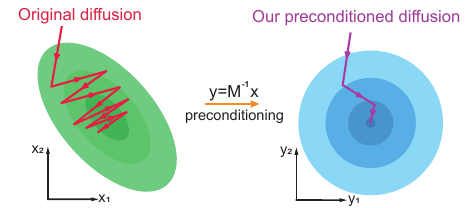}
    \caption{Illustration of our preconditioning method for accelerating sampling process.}
    \label{fig:demo}
\end{figure}

In light of this insight, we propose an efficient {\bf\em \fullmodelname}~(\shortmodelname) method for accelerating existing pretrained SGMs, without the need for model retraining.
The key idea is to make the scales of the target distribution more similar along all the coordinates~\citep{roberts2002langevin,li2016preconditioned} using a {\em matrix preconditioning} operation, hence solving the ill-conditioned problem of both the Langevin dynamics Eq.~\eqref{eq: Langevin_discrete} and the reverse diffusion Eq.~\eqref{eq:discrete_reverse_diffusion}, as demonstrated in Fig.~\ref{fig:demo}.
Formally, we enrich the Langevin dynamics Eq.~\eqref{eq: Langevin_discrete} by reformulating the sampling process as
\begin{align}\label{eq: Langevin_discrete_M}
    \mathbf{x}_{t} = \mathbf{x}_{t-1} +  \frac{\epsilon_t^2}{2}MM^{\mathrm{T}}\bigtriangledown_{\mathbf{x}}\log p_{\mathbf{x}^\ast}(\mathbf{x}_{t-1})  + \epsilon_t M\mathbf{z}_t,
\end{align}
where $M$ is the newly introduced preconditioning matrix designed particularly for regulating the behavior of accelerated sampling processes.
This proposed preconditioning matrix $M$ balances the scale of different coordinates of the sample space, hence alleviating the ill-conditioned issue.
For the reverse diffusion Eq.~\eqref{eq:discrete_reverse_diffusion}, we also equip with a similar preconditioning
\begin{align}
    \mathbf{x}_t= \mathbf{x}_{t-1} +\bar{g}_{t}^2 MM^{\mathrm{T}} \bigtriangledown_{\mathbf{x}}\log \bar{p}_{t}(\mathbf{x}_{t-1}) +\bar{g}_{t}M\mathbf{z}_t,
\end{align}
Theoretically, we prove that PDS can preserve both the steady-state distribution of the original Langevin dynamics and the final-state distribution of the original reverse diffusion, with the assistance of Fokker-Planck equation~\citep{gardiner1985handbook} and related techniques.
We construct the preconditioning operator $M$ using Fast Fourier Transform (FFT)~\citep{brigham1988fast}, making its computational cost marginal.

In this work, we make the following {\bf contributions}:
{\bf (1)} For sampling acceleration,
we introduce a novel \fullmodelname
~(\shortmodelname) process.
\shortmodelname~preconditions the existing sampling process 
for alleviating ill-conditioned issues, 
with {a} theoretical guarantee of 
keeping the original target distributions in convergence.
{\bf (2)}
With \shortmodelname, {various} pretrained SGMs can be accelerated significantly for image synthesis at various spatial resolutions, without model retraining.
In particular, \shortmodelname~delivers $28\times$ reduction in wall-clock time for high-resolution $(1024\times1024)$ image synthesis.
Compared with latest alternatives (\eg, CLD-SGM~\citep{dockhorn2021score}, DDIM~\citep{song2020denoising}, and Analytic-DDIM~\citep{bao2022analytic}), PDS achieves superior performance on CIFAR-10 (FID score 1.99). 

A preliminary version of this work was presented in ECCV 2022~\citep{ma2022accelerating}.
In this paper, we have further extended our conference version as follows:
{\bf (i)} We reveal that the sampling process of the original SGMs is insensitive to the structural data (Sec.~\ref{sec:lim});
{\bf (ii)} We provide more theoretical explanation including its effect of preserving the final-state distribution of the reverse diffusion during sampling (Sec.~\ref{sec:reverse_final});
{\bf (iii)} We develop indicators that further illustrate the mechanism of the PDS in terms of alleviating the ill-conditioning issue.
{\bf (iv)} We conduct more quantitative experiments, including an extra dataset (CelebA, COCO and ImageNet), more iteration budgets evaluation, and comparing with more acceleration methods in both SGMs and DDPMs;
{\bf (v)} We investigate the parameter space of PDS more comprehensively, resulting in further performance improvement;
{\bf (vi)} We develop a theory easing the process of estimating proper {parameters} (Sec.~\ref{sec:theory_relation}) with experimental validation (Sec.~\ref{sec:relation}).
{\bf (vii)} We implement sensitivity analysis on the parameter of PDS and reveal its robustness.
\section{Related work}
\cite{sohl2015deep} first proposed to sequentially corrupt a target data distribution with Gaussian noises and train a model to learn the backward process to recover the target data distribution, inspired by non-equilibrium statistical physics. 
Exploring this procedure later on, \cite{song2019generative} proposed {the first SGM}, the noise conditional score network (NCSN). 
They further proposed NCSNv2 
for higher resolution image generation (\eg, $256\times 256$) by scaling noises and improving stability with moving average~\cite{DBLP:conf/nips/0011E20}.
Summarizing all the previous SGMs into a unified framework based on the stochastic differential equation (SDE), \cite{song2020score} proposed NCSN++ to generate high-resolution images via numerical SDE solvers for the first time. \cite{de2021diffusion} provided the first quantitative convergence results for SGMs. 
As another class of relevant generative models, denoising diffusion probabilistic models (DDPMs)~\citep{DBLP:conf/nips/HoJA20,dhariwal2021diffusion,ho2021cascaded,nichol2021glide} also demonstrate excellent performance on image synthesis.
They are mostly trained by maximizing an evidence lower bound (ELBO).

There are some recent works on improving the sampling quality of SGMs and DDPMs by reshaping the diffusion process.
\cite{song2020denoising} proposed DDIMs by improving the sampling of DDPMs via a non-Markovian sampling process.
\cite{liu2021pseudo} further improved DDIMs by implementing the sampling process near a manifold to guarantee that the model can fit the ground-truth score function well.
Using the Hamiltonian Monte Carlo methods~\citep{neal2011mcmc}, \cite{dockhorn2021score} extended the sample space of SGMs with a velocity vector, and proposed critically-damped Langevin diffusion (CLD) based SGMs.
\cite{jolicoeur2021gotta} used a numerical SDE solver with adaptive step sizes to accelerate SGMs.

As our PDS, some prior methods utilize non-isotropic noises to accelerate the sampling process.
\cite{jing2022subspace} restricted the sampling process in a linear subspace where the data reside to accelerate the convergence of the sampling process. This might help with the {ill-condition} issue and convergence speed, as the space of sampling decreases. However, our PDS directly adjusts the scales of different coordinates of the sampling space to alleviate ill-condition issues, without the need for subspace restriction.
\cite{DBLP:conf/icml/NicholD21} learned an additional variance
vector during training a DDPM. Instead, we exploit a preconditioning matrix simply estimated from the training data, without training extra parameters. With diagonal covariance matrices, \cite{bao2022analytic,bao2022estimating} given the analytical formulation of a sequential of diagonal covariance matrices maximizing the ELBO between the corrupting process and the sampling process for better sample performance. In comparison, our method eliminates the need for solving additional complex optimization problems.

Some works focus on improving the sampling efficiency of SGMs or DDPMs by incorporating extra learnable modules.
Latent Diffusion Models (LDMs) use a nonlinear transformation induced by a variational autoencoder to transfer the sampling process of SGMs into a latent space~\citep{vahdat2021score, rombach2022high}. The lower-dimensional, decorrelated nature of this latent space helps mitigate ill-conditioned issues similar to PDS. However, unlike LDM, our PDS employs a train-free linear transform, offering a more cost-effective solution.
A number of recent works~\citep{salimans2022progressive,meng2023distillation,song2023consistency} utilized the concept of knowledge distillation to expedite the sampling process of DDPMs. Their common idea is a progressive distillation approach, involving the iterative refinement of a pretrained model.
\cite{kingma2021variational} proposed variations diffusion models to learn the noise schedule during training with another neural network for higher log-likelihood of the sampling results.
In comparison, our PDS directly leverages information from the data, avoiding the need for extra model training to accelerate sampling

Efforts to enhance the sampling quality of SGMs are also being made by modifying their training loss. \cite{song2021maximum} proposed reweighting the loss function with the diffusion coefficient of the SGMs. This enhancement aims to boost the log-likelihood of the model's generations, effectively improving the performance of VP (variance preserving) and subVP SGMs. However, this method cannot yield improvements for VE (variance exploding) SGMs.
Further development in this direction is~\citep{lu2022maximum}, where they proposed training the model to minimize {higher-order} score matching errors. This technique notably improves the performance of ordinary differential equation (ODE) type SGMs.
In comparison, our PDS demonstrates superior improvement on the stochastic differential equation (SDE) type VE (variance exploding) SGMs.

The above methods are limited in the following aspects:
{\bf (1)}
They tend to introduce much extra computation. For example, CLD-SGMs double the dimension of data {by} learning the velocity of diffusion. 
\cite{jolicoeur2021gotta} added a high-order numerical solver that increases the number of calling the SGM, resulting in much more cost. 
With our PDS, the only extra calculation relates the preconditioning matrix that can be efficiently implemented by Fast Fourier Transform.
As a result, PDS yields marginal extra computation to the vanilla SGMs.
{\bf (2)}
{They often improve a {\em single specific} SGM/DDPM while our PDS is able to accelerate three different SGMs (NCSN~\citep{song2019generative}, NCSNv2~\citep{DBLP:conf/nips/0011E20}, and NCSN++~\citep{song2020score}).}
{\bf (3)}
Unlike our work, none of {the} previous methods has demonstrated the scalability to more challenging high-resolution image generation tasks (\eg, FFHQ facial images with $1024\times1024$ resolution). 

\section{Preliminaries}\label{sec: pre}
\subsection{Score matching}
Assume that we observe i.i.d. samples from an unknown distribution $p_{\mathbf{x}^{\ast}}(\mathbf{x})$ of random vector $\mathbf{x}\in\mathbb{R}^d$, parameterized as $p_{\mathbf{x}^{\ast}}(\cdot;\theta)$ with $\theta$ the learnable parameters.
In a non-normalized statistical learning setting, we have an explicit expression for $p_{\mathbf{x}^{\ast}}(\cdot;\theta)$ up to a normalization factor: $p_{\mathbf{x}^{\ast}}(\mathbf{x};\theta) = \frac{1}{Z(\theta)}q(\mathbf{x};\theta),\forall \mathbf{x}\in\mathbb{R}^d$,
where $Z(\theta)=\int_{\mathbf{x}\in\mathbb{R}^d} q(\mathbf{x};\theta) d\mathbf{x}$ normalizes $q$ so that it becomes a well-defined probabilistic distribution function.
The normalization factor $Z(\theta)$ is often intractable both analytically and numerically, especially for high-dimensional ($d$) data.
This makes direct learning of $p_{\mathbf{x}^{\ast}}(\cdot;\theta)$ 
challenging.
Score matching is an effective approach to solving this challenge~\citep{hyvarinen2005estimation}. Specifically, a model $\mathbf{s}_{\theta}(\cdot)$ is trained to directly approximate the \textit{score function} $\mathbf{s}_{\mathbf{x}^\ast}(\cdot):=\bigtriangledown_{\mathbf{x}}\log p_{\mathbf{x}^{\ast}}(\cdot)$ through the objective
\begin{align}\label{eq:score_match_loss}
    \mathbb{E}_{\mathbf{x}^{\ast}\sim p_{\mathbf{x}^{\ast}}}[\left\|\mathbf{s}_{\theta}(\mathbf{x})-\mathbf{s}_{\mathbf{x}^\ast}(\mathbf{x})\right\|^2].
\end{align}
{In} the absence of $Z(\theta)$, training with this objective is more tractable.
\cite{hyvarinen2005estimation} further proves that under some mild conditions, if $p_{\mathbf{x}^{\ast}}(\cdot)=p_{\mathbf{x}^{\ast}}(\cdot;\theta^{\ast})$, 
the ground-truth parameters $\theta^\ast$
can be achieved by minimizing Eq.~\eqref{eq:score_match_loss} to zero.

However, Eq.~\eqref{eq:score_match_loss} can not be evaluated in more general cases where an expression of the score function about $\theta$ is absent. To address this, \cite{vincent2011connection} proposes an equivalent, simpler objective inspired by denoising autoencoders
\begin{align}\label{eq:score_denoise_loss}
     \mathbb{E}_{\mathbf{x}\sim p_{\mathbf{x}^{\ast}}}[\left\|\mathbf{s}_{\theta}(\mathbf{x}_{\sigma})-\bigtriangledown_{\mathbf{x}_{\sigma}}\log p_{\sigma}(\mathbf{x}_{\sigma}|\mathbf{x}) \right\|^2],
\end{align}
where $p_{\sigma}(\mathbf{x}_{\sigma}|\mathbf{x})=\mathcal{N}(\mathbf{x}_{\sigma};\mathbf{x},\sigma^2I)$. Formally, given a noised data sample $\mathbf{x}_{\sigma}$, the model $\mathbf{s}_{\theta}(\cdot)$ is trained to extract the Gaussian noise vector added into the sample. This denoises the sample corrupted by the noise, \ie, subtracting the noise vector gives the original sample.
It is proved that minimizing the objective Eq.~\eqref{eq:score_denoise_loss} allows the model to fit the ground-truth score function $\mathbf{s}_{\mathbf{x}^\ast}$ of the data distribution~\citep{vincent2011connection}.

\subsection{Langevin dynamics}
SGMs generate samples from a target distribution by iteratively evaluating the target score function.
The first SGM, noise conditional score network (NCSN) \citep{song2019generative} $\mathbf{s}_{\theta}$, is trained on a series of score matching tasks via
\begin{align*}
    \sum_{t=1}^T\sigma^2_t \mathbb{E}_{\mathbf{x}\sim p_{\mathbf{x}}}\left[\left\|\mathbf{s}_{\theta}(\mathbf{x}_{\sigma_t},\sigma_t)-\bigtriangledown_{\mathbf{x}_{\sigma_t}}\log p_{\sigma_t}(\mathbf{x}_{\sigma_t}|\mathbf{x}) \right\|^2\right],
\end{align*}
with the standard deviations $\{\sigma_t\}_{t=1}^T$ satisfies $0<\sigma_1 < \sigma_2 < \ldots < \sigma_T$.
{During} inference, an NCSN generates samples from $p_{\mathbf{x}^{\ast}}$ through the discretized Langevin dynamics 
\begin{align}\label{eq: reverse}
    \mathbf{x}_{t} = \mathbf{x}_{t-1} +  \frac{\epsilon_t^2}{2}\bigtriangledown_{\mathbf{x}_{t-1}}\log p_{\mathbf{x}^{\ast}}(\mathbf{x}_{t-1}) + \epsilon_t \mathbf{z}_t, 1\leq t \leq T,
\end{align}
where the initialization $\mathbf{x}_0\sim \mathcal{N}(\mathbf{0},I)$, $\log p_{\mathbf{x}^{\ast}}(\mathbf{x}_{t-1})$ is approximated by $\mathbf{s}_{\theta}(\mathbf{x}_{t-1},\sigma_i)$ at a proper level of $\sigma_i$, $\epsilon_t>0$  the step size, and $T$ the iterations.
With this process, we transform a Gaussian sample $\mathbf{x}_0$ to $\mathbf{x}_T$ in the target distribution $p_{\mathbf{x}^{\ast}}$.
{Note that we define our notations for the forward and reverse processes in the manner that the process begins with \(\mathbf{x}_0\) and ends with \(\mathbf{x}_T\).}
The Langevin dynamics Eq.~\eqref{eq: reverse} works in that 
as $T\rightarrow+\infty$, the distribution of $\mathbf{x}_t$ converges to a {\em steady-state distribution}, exactly $p_{\mathbf{x}^{\ast}}$~\citep{girolami2011riemann}..
Improvements can be made
by refining noise scales, iterations, and step {size},
and using the moving average technique~\citep{DBLP:conf/nips/0011E20}.

\subsection{Reverse diffusion}
Given the discretized corrupting (\ie, forward) process
\begin{align}\label{eq:forward_diffusion}
    \mathbf{x}_0\sim p_{\mathbf{x}^{\ast}},\mathbf{x}_t=\mathbf{x}_{t-1} + g_t\mathbf{z}_t, 1\leq t \leq T,
\end{align}
where $g_t = \sqrt{\sigma_t^2-\sigma_{t-1}^2}$,
an alternative generation strategy is by simulating a corresponding discretized reverse process
\begin{align}\label{eq:reverse_diffusion_discrete}
   \mathbf{x}_t= \mathbf{x}_{t-1} + \bar{g}_{t}^2 \bigtriangledown_{\mathbf{x}}\log \bar{p}_{t}(\mathbf{x}) +\bar{g}_{t}\mathbf{z}_t,1\leq t \leq T,
\end{align}
where the initial distribution $\mathbf{x}_0\sim \mathcal{N}(\mathbf{0},I)$, $\bar{g}_{t}=g_{T-t}$, $\bar{p}_{t}={p}_{T-t}$ is the probabilistic distribution function of $\mathbf{x}_{T-t}$ in the forward process.
In this way, the final-steady state $\mathbf{x}_T$ of reverse diffusion obeys $p_{\mathbf{x}^\ast}$~\citep{haussmann1986time}.
Eq.~\eqref{eq:forward_diffusion} and \eqref{eq:reverse_diffusion_discrete} both are the discretized version of the corresponding stochastic differential equation (SDE).
Hence, a numerical solver of SDE can be used to implement the sampling process, as realized in NCSN++ \citep{song2020score}. 
Specifically, NCSN++ learns to fit the gradient term $\bigtriangledown_{\mathbf{x}}\log p_{T-t}(\mathbf{x})$ through minimizing 
\begin{align*}
       \mathbb{E}_{t}\lambda_t\mathbb{E}_{\mathbf{x}_0\sim p_{\mathbf{x}^\ast}}\left[\left\|\mathbf{s}_{\theta}(\mathbf{x}(t),t)-\bigtriangledown_{\mathbf{x}_{t}}\log p_{t}(\mathbf{x}_t|\mathbf{x}_0) \right\|^2\right], 
\end{align*}
where $t$ is uniformly drawn from $\{1,\ldots,T\}$, and $\lambda_t$ is the weight at $t$.
During sampling, NCSN++ updates the state in two steps (Alg.~\ref{alg:original_sample}): (1) predicting with the reverse diffusion (Eq.~\eqref{eq:reverse_diffusion_discrete}),
(2) correcting with the Langevin dynamics (Eq.~\eqref{eq: reverse}). The variance exploding (VE) variant is the best among all NCSN++ models~\citep{song2020score}.

\begin{algorithm}[h]
  \caption{Sampling process of VE NCSN++}
  \label{alg:original_sample}
\begin{algorithmic}
  \State {\bfseries Input:} 
  The sampling iterations $T$, the score model $\mathbf{s}_{\theta}$;
  \State {\bfseries Initialization:} Draw $\mathbf{x}_0 \sim\mathcal{N}(\mathbf{0},I)$;
  \For{$t=1$ {\bfseries to} $T$} 
        \State \textbf{Predicting (reverse diffusion):} 
        \State Drawing $\mathbf{z}_t\sim\mathcal{N}(\mathbf{0},I)$
     \State $\mathbf{x}'_{t} \leftarrow \mathbf{x}_{t-1} +\bar{g}_{t}^2\mathbf{s}_{\theta}(\mathbf{x}_{t-1},\bar{g}_{t})+\bar{g}_{t}\mathbf{z}_t$
        \State \textbf{Correcting (Langevin dynamics):} 
        \State Drawing $\mathbf{z}_t\sim\mathcal{N}(\mathbf{0},I)$
        \State $\mathbf{x}_{t} \leftarrow \mathbf{x}'_{t} + \frac{\epsilon_t^2}{2}\mathbf{s}_{\theta}(\mathbf{x}'_{t},\epsilon_t)+\epsilon_t\mathbf{z}_t$
  \EndFor
  \State {\bfseries Output:} $\mathbf{x}_T$
\end{algorithmic}
\end{algorithm}

\subsection{Limitation analysis}\label{sec:lim}
While having superior generating capability over GANs, SGMs 
run much slower.
A direct way for acceleration is to reduce the iterations $T$ and increase the step size $\epsilon_t$ proportionally (\ie, one step for multiple ones).
This however leads to severe degradation (see Fig.~\ref{fig:ffhq}).
This is partly because the sampling process is fundamentally limited in sampling structural data (\eg, images), as elaborated below.
\begin{theorem}\label{Thm: transform}
Suppose $U\in \mathbb{R}^{d\times d}$ is an orthogonal matrix, then the Langevin dynamics Eq.~\eqref{eq: Langevin_discrete}
can be rewritten as
\begin{align}\label{eq:freq_diffusion}
\tilde{\mathbf{x}}_{t}  = \tilde{\mathbf{x}}_{t-1} + \frac{\epsilon^2_t}{2}\bigtriangledown_{\tilde{\mathbf{x}}}\log p_{\tilde{\mathbf{x}}^{\ast}}(\tilde{\mathbf{x}}_{t-1})+\epsilon_tU\mathbf{z}_t,
\end{align}
where $\tilde{\mathbf{x}} = U\mathbf{x}$. Similarly, the reverse diffusion Eq.\eqref{eq:reverse_diffusion_discrete} can also be rewritten as
\begin{align}\label{eq:freq_reverse_diffusion}
    &\tilde{\mathbf{x}}_{t} = \tilde{\mathbf{x}}_{t-1} + \bar{g}_{t}^2 \bigtriangledown_{\tilde{\mathbf{x}}_{t} }\log \bar{p}_{t}(\tilde{\mathbf{x}}_{t} ) +\bar{g}_{t}U\mathbf{z}_t
\end{align}
\end{theorem}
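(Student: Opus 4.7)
The plan is to apply $U$ to both sides of the original updates and verify that the score term transforms correctly. First I would left-multiply the Langevin update in Eq.~\eqref{eq: Langevin_discrete} by $U$ to get
\begin{align*}
U\mathbf{x}_t \;=\; U\mathbf{x}_{t-1} + \frac{\epsilon_t^2}{2}\,U\nabla_{\mathbf{x}}\log p_{\mathbf{x}^{\ast}}(\mathbf{x}_{t-1}) + \epsilon_t U\mathbf{z}_t,
\end{align*}
so that using the shorthand $\tilde{\mathbf{x}} = U\mathbf{x}$, I only need to verify the identity $U\nabla_{\mathbf{x}}\log p_{\mathbf{x}^{\ast}}(\mathbf{x}) = \nabla_{\tilde{\mathbf{x}}}\log p_{\tilde{\mathbf{x}}^{\ast}}(\tilde{\mathbf{x}})$.

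The key step is a standard change-of-variables argument. Since $U$ is orthogonal, $|\det U| = 1$, and the push-forward density of $\tilde{\mathbf{x}}^{\ast} = U\mathbf{x}^{\ast}$ is $p_{\tilde{\mathbf{x}}^{\ast}}(\tilde{\mathbf{x}}) = p_{\mathbf{x}^{\ast}}(U^{\mathrm{T}}\tilde{\mathbf{x}})$. Taking the logarithm and differentiating with respect to $\tilde{\mathbf{x}}$ via the chain rule yields
\begin{align*}
\nabla_{\tilde{\mathbf{x}}}\log p_{\tilde{\mathbf{x}}^{\ast}}(\tilde{\mathbf{x}}) \;=\; (U^{\mathrm{T}})^{\mathrm{T}}\,\nabla_{\mathbf{x}}\log p_{\mathbf{x}^{\ast}}(\mathbf{x})\big|_{\mathbf{x}=U^{\mathrm{T}}\tilde{\mathbf{x}}} \;=\; U\,\nabla_{\mathbf{x}}\log p_{\mathbf{x}^{\ast}}(\mathbf{x}),
\end{align*}
which is exactly the identity needed. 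Substituting this back into the left-multiplied update recovers Eq.~\eqref{eq:freq_diffusion}.

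For the reverse diffusion Eq.~\eqref{eq:reverse_diffusion_discrete}, I would repeat the same two-step procedure: left-multiply the update by $U$ and then replace $U\nabla_{\mathbf{x}}\log \bar{p}_t(\mathbf{x})$ by $\nabla_{\tilde{\mathbf{x}}}\log \bar{p}_t(\tilde{\mathbf{x}})$ via the identical change-of-variables calculation (the only difference being that $\bar{p}_t$ plays the role of $p_{\mathbf{x}^{\ast}}$). This yields Eq.~\eqref{eq:freq_reverse_diffusion}.

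There is really no hard step here; the proof is essentially a two-line computation once one invokes orthogonality of $U$. The only subtlety worth pointing out is that the Gaussian noise term is not simplified: because $U\mathbf{z}_t$ is again a standard Gaussian ($UU^{\mathrm{T}}=I$), one could replace $U\mathbf{z}_t$ by a fresh $\mathbf{z}'_t$ in distribution, but the theorem deliberately retains $U\mathbf{z}_t$ to emphasize the point that under a coordinate rotation by $U$ the update structure is preserved exactly. This framing then sets up the later argument that the ill-conditioning manifests in a particular basis (\eg, the Fourier basis) and motivates the preconditioning construction.
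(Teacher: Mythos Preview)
Your proposal is correct and follows essentially the same route as the paper: left-multiply the update by $U$, use the change-of-variables formula for the push-forward density, apply the chain rule, and invoke $UU^{\mathrm{T}}=I$. The only cosmetic difference is that the paper splits the computation into two steps (first relating $\log p_{\tilde{\mathbf{x}}^{\ast}}$ to $\log p_{\mathbf{x}^{\ast}}$, then separately writing $\nabla_{\mathbf{x}}=U^{\mathsf{T}}\nabla_{\tilde{\mathbf{x}}}$), whereas you collapse these into a single chain-rule application; the substance is identical.
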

\begin{proof}
Multiplying $U$ at both sides of the original Langevin dynamics, we have
\begin{align}\label{eq:times_U}
    \tilde{\mathbf{x}}_{t-1}  = \tilde{\mathbf{x}}_t + \frac{\epsilon^2_t}{2}U\bigtriangledown_{\mathbf{x}}\log p_{\mathbf{x}^{\ast}}(\mathbf{x}_t)+\epsilon_tU\mathbf{z}_t.
\end{align}
Using the change-of-variable technique, we have
\begin{align*}
 p_{\tilde{\mathbf{x}}^{\ast}}(\tilde{\mathbf{x}}) = p_{\mathbf{x}^{\ast}}(U^{-1}\tilde{\mathbf{x}})\left | U \right |^{-1}.
\end{align*}
By taking the logarithm and calculating the gradients at both sides of the equation, we have
\begin{align}\label{eq:trans}
    \bigtriangledown_{\tilde{\mathbf{x}}}\log p_{\tilde{\mathbf{x}}^{\ast}}(\tilde{\mathbf{x}}) = \bigtriangledown_{\tilde{\mathbf{x}}}\log p_{\mathbf{x}^{\ast}}(\mathbf{x})
\end{align}
Combining Eq.~\eqref{eq:trans}, the chain rule $\bigtriangledown_{\mathbf{x}} = U^{\mathsf{T}}\bigtriangledown_{\tilde{\mathbf{x}}}$, and Eq.~\eqref{eq:times_U}, we obtain
\begin{align*}
    \tilde{\mathbf{x}}_{t-1}  = \tilde{\mathbf{x}}_t + \frac{\epsilon_t^2}{2}UU^{\mathsf{T}}\bigtriangledown_{\tilde{\mathbf{x}}}\log p_{\tilde{\mathbf{x}}^{\ast}}(\tilde{\mathbf{x}}_t)+\epsilon_tU\mathbf{z}_t.
\end{align*}
As $U$ is orthogonal, $UU^{\mathsf{T}} = I_d$. We thus finish the proof for the Langevin dynamics Eq.~\eqref{eq: reverse}. The proof of the reverse diffusion Eq.~\eqref{eq:reverse_diffusion_discrete} is almost the same.
\end{proof}
According to this theorem, we can directly transform the entire sampling process by an orthogonal transformation $U$, with the only difference that the noise is altered to $U\mathbf{z}_t$. Particularly, if $\mathbf{z}_t$ obeys an isotropic Gaussian distribution applied by existing methods,
$U\mathbf{z}_t$ obeys exactly the same distribution as $\mathbf{z}_t$. 
Hence, the sampling process is {\bf \em orthogonally invariant}.

Since any permutation matrix (which operates on a vector by changing the order of its coordinates) is orthogonal, Thm.~\ref{Thm: transform} implies that the sampling process remains equivalent regardless of how the coordinates of \(\mathbf{x}_t\) are reordered. However, for structured data (e.g., images), the order of coordinates encodes both local geometric information of each pixels and global semantic structures. Therefore, the current sampling process in Eq.~\eqref{eq: reverse} disregards the structural integrity of the data.

\subsection{Preconditioning}
Conceptually, the Langevin dynamics Eq.~\eqref{eq: reverse} of the sampling process is a special case of Metropolis adjusted Langevin algorithm (MALA)~\citep{roberts2002langevin,welling2011bayesian,girolami2011riemann} with non-autonomous factor $\epsilon_t$.
Inspired by the findings that the preconditioning approach can significantly accelerate the MALA~\citep{roberts2002langevin}, we study {\em how to construct a preconditioning operator $M$ to accelerate the sampling phase of SGMs}.

Specifically, the original sample space may suffer from an ill-conditioned issue caused by the large variations of the scales in different coordinates. This could lead to oscillation and slow convergence, especially when the step size $\epsilon_t$ is not small enough~\citep{nocedal1999numerical}.
Preconditioning can make the scales become more similar along all the coordinates~\citep{roberts2002langevin,li2016preconditioned} using a {\em matrix preconditioner}, alleviating the ill-conditioned problem and improving the convergence of the sampling process even under a large step size.
This applies to the reverse diffusion (Eq.~\eqref{eq:reverse_diffusion_discrete}) similarly.
As shown in the following sections, with the proposed preconditioning, the noise terms $\mathbf{z}_t$ become anisotropic (\ie, its covariance matrix is no longer a scalar matrix). As a result, the sampling process of SGMs {varies} under the permutation of coordinates, becoming conditional to the coordinate order (Thm.~\ref{Thm: transform}). 
\section{Theoretical analysis}
In this section, we develop the theories for the formulation of our preconditioning operator suitable for the sampling process {incorporating prediction and correction} (Alg.~\ref{alg:original_sample}).
{Since the sampling process consists of two types of stochastic processes—prediction (reverse diffusion) and correction (Langevin dynamics)—we must demonstrate that the preconditioning operator preserves the steady-state distribution of each process separately.}

\subsection{Continuous processes}
For theoretical purposes, we first give the continuous version of sampling
discussed in Sec.~\ref{sec: pre}.
The continuous version of Eq.~\eqref{eq: reverse} is
\begin{align}\label{eq:reverse_con}
        d\mathbf{x}_t =  \frac{\epsilon_t^2}{2}\bigtriangledown_{\mathbf{x}_t}\log p_{\mathbf{x}^{\ast}}(\mathbf{x}_t)dt  + \epsilon_t d\mathbf{w}_t,t\in[0,T]
\end{align}
starting from an initial distribution $\mathbf{x}_0\sim\mathcal{N}(\mathbf{0},I)$, with {$\mathbf{w}_t$} a Wiener process, and $\epsilon_t$ {is} the continuous version of the step size.
The continuous version for corrupting process (Eq.~\eqref{eq:forward_diffusion}) is
\begin{align}\label{eq:forward_diffusion_con}
    d\mathbf{x}_t=g_td\mathbf{w}_t,t\in[0,T],
\end{align}
 starting from an initial distribution $\mathbf{x}_0\sim p_{\mathbf{x}^{\ast}}$.
The continuous version of reverse diffusion {is}(Eq.~\eqref{eq:reverse_diffusion_discrete})
\begin{align}\label{eq:reverse_diffusion}
    d\mathbf{x}_t= \bar{g}_{t}^2 \bigtriangledown_{\mathbf{x}}\log \bar{p}_{t}(\mathbf{x}_t) +\bar{g}_{t}d\mathbf{w}_t,t\in[0,T]
\end{align}
starting from an initial distribution $\mathbf{x}_0\sim p_{\mathbf{x}^{\ast}}$. Note that these processes result in a different distribution for the initial state $\mathbf{x}_0$.

\subsection{Preserving the steady-state distribution {of} the Langevin dynamics}

To ensure generation quality, the steady-state distribution of the Langevin dynamics (Eq.~\eqref{eq:reverse_con}) {must match} the target distribution \(p_{\mathbf{x}^{\ast}}\). Failure to satisfy this condition may lead to poor performance. We prove that the proposed preconditioning preserves the steady-state distribution.

We first derive the theory behind the Langevin dynamics on why it can generate samples from $p_{\mathbf{x}^\ast}$.
According to~\citep{gardiner1985handbook}, Eq.~\eqref{eq:reverse_con} is associated with a Fokker-Planck equation
\begin{align}\label{eq: Langevin_fp}
    \frac{\partial p}{\partial t} = - \frac{\epsilon_t^2}{2}\bigtriangledown_{\mathbf{x}}\cdot (\bigtriangledown_{\mathbf{x}}\log p_{\mathbf{x}^{\ast}}(\mathbf{x}_t)p)+\frac{\epsilon_t^2}{2}\Delta_{\mathbf{x}} p,
\end{align}
where $p=p(\mathbf{x},t)$ is the distribution of $\mathbf{x}$ at time $t$.
This describes the stochastic dynamics of Eq.~\eqref{eq:reverse_con}.
Let $\frac{\partial p}{\partial t} = 0$, we obtain the steady-state function
\begin{align}\label{eq: Langevin_steady}
   \bigtriangledown_{\mathbf{x}}\cdot (\bigtriangledown_{\mathbf{x}}\log p_{\mathbf{x}^{\ast}}(\mathbf{x})p) = \Delta_{\mathbf{x}} p.
\end{align}
Its solution, \ie, the steady-state solution, corresponds to the probabilistic density function of the steady-state distribution of Eq.~\eqref{eq:reverse_con}. Substituting $p$ with $p_{\mathbf{x}^{\ast}}$ in Eq.~\eqref{eq: Langevin_steady}, the L.H.S. of this equation becomes
\begin{align*}
    \bigtriangledown_{\mathbf{x}}\cdot (\bigtriangledown_{\mathbf{x}}\log (p_{\mathbf{x}^{\ast}})p_{\mathbf{x}^{\ast}}) = \bigtriangledown_{\mathbf{x}}\cdot (\bigtriangledown_{\mathbf{x}}(p_{\mathbf{x}^{\ast}})) =  \Delta_{\mathbf{x}} p^\ast,
\end{align*}
which is exactly the R.H.S. of the equation. Hence, the target data distribution $p_{\mathbf{x}^\ast}$ is one of the steady-state solutions of Eq.~\eqref{eq:reverse_con}.
It is proved that this  steady-state solution is {\em unique}, {\em s.t.} $\epsilon_t>0$~\citep{risken1985fokker}. As a result, with enough iterations, the state distribution will converge to the target data distribution $p_{\mathbf{x}^{\ast}}$.

The Fokker-Planck equation indicates how to preserve the steady-state distribution of the original process under modification. Concretely, we can theoretically show that the steady-state distribution remains when changing Eq.~\eqref{eq:reverse_con} as
\begin{align}\label{eq: Langevin_M}
    d\mathbf{x}_t = \frac{\epsilon_t^2}{2}(MM^{\mathrm{T}}+S)\bigtriangledown_{\mathbf{x}_t}\log p_{\mathbf{x}^{\ast}}(\mathbf{x}_t)dt + \epsilon_t Md\mathbf{w}_t,
\end{align}
where $M$ is an invertible linear operator, $M^\mathrm{T}$ is its adjoint operator, and $S$ is a skew-symmetric operator.
\begin{theorem}\label{thm: unchange}
The steady-state distribution of Eq.~\eqref{eq: Langevin_M} is $p_{\mathbf{x}^{\ast}}$, as long as the linear operator $M$ is invertible and the linear operator $S$ is skew-symmetric.
\end{theorem}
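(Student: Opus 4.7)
The plan is to mirror the derivation used in Eqs.~\eqref{eq: Langevin_fp}--\eqref{eq: Langevin_steady} but for the preconditioned SDE in Eq.~\eqref{eq: Langevin_M}, and then verify that $p_{\mathbf{x}^{\ast}}$ still satisfies the stationary Fokker--Planck equation. First I would read off the drift $\mu_t = \frac{\epsilon_t^2}{2}(MM^{\mathrm{T}}+S)\bigtriangledown_{\mathbf{x}}\log p_{\mathbf{x}^{\ast}}$ and the diffusion matrix $\sigma\sigma^{\mathrm{T}} = \epsilon_t^2 MM^{\mathrm{T}}$, noting that $S$ does not enter the diffusion term because it only appears inside the drift. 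Writing the associated Fokker--Planck equation gives
\begin{align*}
\frac{\partial p}{\partial t} = -\bigtriangledown_{\mathbf{x}}\cdot\Bigl(\tfrac{\epsilon_t^2}{2}(MM^{\mathrm{T}}+S)\bigtriangledown_{\mathbf{x}}\log p_{\mathbf{x}^{\ast}}\,p\Bigr) + \tfrac{\epsilon_t^2}{2}\bigtriangledown_{\mathbf{x}}\cdot(MM^{\mathrm{T}}\bigtriangledown_{\mathbf{x}} p),
\end{align*}
which I would then evaluate at the candidate steady state $p = p_{\mathbf{x}^{\ast}}$.

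The next step is the key cancellation. Using the identity $p_{\mathbf{x}^{\ast}}\bigtriangledown_{\mathbf{x}}\log p_{\mathbf{x}^{\ast}} = \bigtriangledown_{\mathbf{x}} p_{\mathbf{x}^{\ast}}$, the $MM^{\mathrm{T}}$ part of the drift-divergence term becomes $\tfrac{\epsilon_t^2}{2}\bigtriangledown_{\mathbf{x}}\cdot(MM^{\mathrm{T}}\bigtriangledown_{\mathbf{x}} p_{\mathbf{x}^{\ast}})$, which exactly cancels the second term on the right. What remains is $-\tfrac{\epsilon_t^2}{2}\bigtriangledown_{\mathbf{x}}\cdot(S\bigtriangledown_{\mathbf{x}} p_{\mathbf{x}^{\ast}})$. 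Writing this in components yields $\sum_{i,j} S_{ij}\,\partial_i\partial_j p_{\mathbf{x}^{\ast}}$, which is the contraction of the symmetric Hessian of $p_{\mathbf{x}^{\ast}}$ against the skew-symmetric matrix $S$, and therefore vanishes identically. Hence $\partial_t p_{\mathbf{x}^{\ast}} = 0$, confirming that $p_{\mathbf{x}^{\ast}}$ is a stationary solution of the Fokker--Planck equation associated with Eq.~\eqref{eq: Langevin_M}.

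Finally, to promote ``a steady-state solution'' to ``the steady-state distribution'', I would invoke uniqueness. Since $M$ is invertible, $MM^{\mathrm{T}}$ is strictly positive definite, so the diffusion coefficient of Eq.~\eqref{eq: Langevin_M} is uniformly non-degenerate. Under the same regularity assumed in the proof of the vanilla Langevin case, this ellipticity implies that the stationary solution of the Fokker--Planck equation is unique (the same reference \cite{risken1985fokker} invoked after Eq.~\eqref{eq: Langevin_steady} applies verbatim once positive definiteness of $MM^{\mathrm{T}}$ is in hand). Combining uniqueness with the previous paragraph yields the claim.

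The main obstacle I anticipate is not analytic but bookkeeping: one has to be careful that (i) the skew-symmetric term $S$ truly drops out in the diffusion coefficient because it sits inside the drift only, and (ii) the cancellation between the $MM^{\mathrm{T}}$ piece of the drift and the diffusion term is exact, which requires the particular pairing of $MM^{\mathrm{T}}$ on the drift with $\sigma\sigma^{\mathrm{T}}=MM^{\mathrm{T}}$ on the noise. Apart from that, the vanishing $\sum_{i,j}S_{ij}\partial_i\partial_j p_{\mathbf{x}^{\ast}} = 0$ is the one conceptual observation to highlight, and the uniqueness step is a direct appeal to the non-degeneracy of the diffusion, which the invertibility hypothesis on $M$ was precisely designed to guarantee.
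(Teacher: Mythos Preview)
Your proposal is correct and follows essentially the same route as the paper: write the Fokker--Planck equation for Eq.~\eqref{eq: Langevin_M}, use $p_{\mathbf{x}^{\ast}}\bigtriangledown_{\mathbf{x}}\log p_{\mathbf{x}^{\ast}} = \bigtriangledown_{\mathbf{x}} p_{\mathbf{x}^{\ast}}$ to cancel the $MM^{\mathrm{T}}$ drift term against the diffusion term, observe that the skew-symmetric contraction $\sum_{i,j}S_{ij}\partial_i\partial_j p_{\mathbf{x}^{\ast}}$ vanishes, and then invoke uniqueness of the stationary solution via the invertibility of $M$. The only cosmetic difference is that the paper cites \cite{gardiner1985handbook} rather than \cite{risken1985fokker} for the uniqueness step.
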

\begin{proof}
The invertibility of $M$ preserves the {uniqueness} of the solution of the steady-state equation~\citep{gardiner1985handbook}.
Hence, we only need to prove that the target data distribution $p_{\mathbf{x}^{\ast}}$ still satisfies the steady-state equation after preconditioning.
The Fokker-Planck equation of Eq.~\eqref{eq: Langevin_M} is
\begin{align*}
        &\frac{\partial p}{\partial t} = - \frac{\epsilon^2}{2}\bigtriangledown_{\mathbf{x}_t}\cdot (MM^{\mathrm{T} }\bigtriangledown_{\mathbf{x}_t}\log p_{\mathbf{x}^{\ast}}(\mathbf{x})p)  \nonumber\\ &-\frac{\epsilon^2}{2}\bigtriangledown_{\mathbf{x}_t}\cdot  (S\bigtriangledown_{\mathbf{x}_t}\log p_{\mathbf{x}^{\ast}}(\mathbf{x})p)+\frac{\epsilon^2}{2}\bigtriangledown_{\mathbf{x}_t}\cdot (MM^{\mathrm{T} }\bigtriangledown_{\mathbf{x}_t} p).
\end{align*}
The corresponding steady-state equation is
\begin{align*}
&\bigtriangledown_{\mathbf{x}}\cdot (MM^{\mathrm{T} }\bigtriangledown_{\mathbf{x}}\log p_{\mathbf{x}^{\ast}}(\mathbf{x})p)+\bigtriangledown_{\mathbf{x}}\cdot  (S\bigtriangledown_{\mathbf{x}}\log p_{\mathbf{x}^{\ast}}(\mathbf{x})p)\\ &=\bigtriangledown_{\mathbf{x}}\cdot (MM^{\mathrm{T} }\bigtriangledown_{\mathbf{x}} p).  
\end{align*}
Set $p=p_{\mathbf{x}^{\ast}}$, the above equation becomes
\begin{align*}
    &\bigtriangledown_{\mathbf{x}}\cdot (MM^{\mathrm{T} }\bigtriangledown_{\mathbf{x}}\log( p_{\mathbf{x}^{\ast}})p_{\mathbf{x}^{\ast}})+\bigtriangledown_{\mathbf{x}}\cdot  (S\bigtriangledown_{\mathbf{x}}\log( p_{\mathbf{x}^{\ast}})p_{\mathbf{x}^{\ast}})\\ &=\bigtriangledown_{\mathbf{x}}\cdot (MM^{\mathrm{T} }\bigtriangledown_{\mathbf{x}} p_{\mathbf{x}^{\ast}}). 
\end{align*}
The first term in L.H.S. equals to the \textit{R.H.S.}, since
\begin{align*}
     \bigtriangledown_{\mathbf{x}}\cdot (MM^{\mathrm{T} }\bigtriangledown_{\mathbf{x}}p_{\mathbf{x}^{\ast}} \frac{1}{p_{\mathbf{x}^{\ast}}}p_{\mathbf{x}^{\ast}})
     = \bigtriangledown_{\mathbf{x}}\cdot (MM^{\mathrm{T} }\bigtriangledown_{\mathbf{x}}p_{\mathbf{x}^{\ast}}).
\end{align*}
Additionally, the second term in the \textit{L.H.S}. equals to 
\begin{align*}
    \bigtriangledown_{\mathbf{x}}\cdot  (S\bigtriangledown_{\mathbf{x}}p_{\mathbf{x}^{\ast}}) =   (\bigtriangledown_{\mathbf{x}}\cdot  S\bigtriangledown_{\mathbf{x}})p_{\mathbf{x}^{\ast}},
\end{align*}
where the differential operator
\begin{align*}
    \bigtriangledown_{\mathbf{x}}\cdot  S\bigtriangledown_{\mathbf{x}} = \sum_{ij}\frac{\partial}{\partial x_i}\frac{\partial}{\partial x_j}S_{ij} = 0
\end{align*}
since $S$ is skew-symmetric. Therefore, the term $\bigtriangledown_{\mathbf{x}}\cdot  (S\bigtriangledown_{\mathbf{x}}p_{\mathbf{x}^{\ast}})=0$.
Then, the steady-state solution of Eq.~\eqref{eq: Langevin_steady}, \ie, $p_{\mathbf{x}^{\ast}}$, also satisfies the steady-state equation of Eq.~\eqref{eq: Langevin_M}. As a result, the theorem is proved.
\end{proof}
Thm.~\ref{thm: unchange} provides a wide manipulative space for us to design the preconditioning operator $M$ and $S$ while preserving the steady-state distribution of $\mathbf{x}_T$ in the Langevin dynamics simultaneously. {More precisely, given a trained SGM with the sampling process described by Eq.~\eqref{eq:reverse_con}, the sampling process can be modified to Eq.~\eqref{thm: unchange} without altering the final generated distribution, provided that the step size is sufficiently small.
}
 
\subsection{Preserving the final-state distribution {of} the reverse diffusion}\label{sec:reverse_final}
{Although seemingly similar, the reverse diffusion (Eq.~\eqref{eq:reverse_diffusion}) is different from the Langevin dynamics (Eq.~\eqref{eq:reverse_con}) as $\bar{p}_t(\mathbf{x})\neq p_{\mathbf{x}^{\ast}}(\mathbf{x})$ in general. Therefore, we need to reanalyze how to preserve the steady-state distribution of the reverse diffusion process when introducing alteration to the process.
As the reverse diffusion (Eq.~\eqref{eq:reverse_diffusion}) directly reverses the corrupting process, its final-steady state $\mathbf{x}_T$ obeys $p_{\mathbf{x}^\ast}$ naturally~\citep{haussmann1986time}.}
Note {that} the final-steady distribution of Eq.~\eqref{eq:reverse_diffusion} may not be its steady-state distribution. 
This can be seen if we write out its steady-state equation through {the} Fokker-Planck equation
\begin{align*}
       2\bigtriangledown_{\mathbf{x}}\cdot (\bigtriangledown_{\mathbf{x}}\log p_{\mathbf{x}^\ast}(\mathbf{x})p) = \Delta_{\mathbf{x}} p.
\end{align*}
It is obvious that the target data distribution $p_{\mathbf{x}^{\ast}}$ does not satisfy this equation.

Accordingly, to enable the reverse diffusion {to} generate samples from $p_{\mathbf{x}^\ast}$ after preconditioning, we must preserve its final-state distribution $\mathbf{x}_T$.
To that end, we modify Eq.~\eqref{eq:reverse_diffusion} as
\begin{align}\label{eq:reverse_diffusion_M}
d\mathbf{x}_t= \bar{g}_{t}^2MM^T \bigtriangledown_{\mathbf{x}_t}\log \bar{p}_{t}(\mathbf{x}_t)
    +\bar{g}_{t}Md\mathbf{w}_t,t\in[0,T],
\end{align}
In fact, we have
\begin{theorem}\label{thm:unchange2}
The final-state distribution of Eq.~\eqref{eq:reverse_diffusion_M}
is also $p_{\mathbf{x}^\ast}$, subject to $M$ is an invertible linear operator.
\end{theorem}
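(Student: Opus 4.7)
The plan is to mirror the Fokker--Planck argument of Thm.~\ref{thm: unchange} but adapted to the reverse diffusion setting, where what must be preserved is the entire family of time-varying marginal densities rather than a single steady state. I would introduce as an auxiliary object the preconditioned corrupting process
\begin{align*}
d\mathbf{x}_t = g_t M d\mathbf{w}_t, \quad \mathbf{x}_0 \sim p_{\mathbf{x}^\ast},
\end{align*}
interpret $\bar p_t$ in Eq.~\eqref{eq:reverse_diffusion_M} as the time-reversed marginal of this companion process (that is, $\bar p_t(\mathbf{x}) := p_{T-t}(\mathbf{x})$ with $p_t$ the law of the companion forward diffusion at time $t$), and show that Eq.~\eqref{eq:reverse_diffusion_M} is precisely the Anderson--Haussmann--Pardoux time-reversal of this companion SDE. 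Since the reversal of a diffusion initialised at $p_{\mathbf{x}^\ast}$ terminates at $p_{\mathbf{x}^\ast}$ by construction, this would immediately yield the claim $\mathbf{x}_T \sim p_{\mathbf{x}^\ast}$.

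To carry this out in the paper's Fokker--Planck style, I would first write the Fokker--Planck equation of the companion forward process,
\begin{align*}
\frac{\partial p_t}{\partial t} = \tfrac{1}{2}\nabla_{\mathbf{x}} \cdot (g_t^2 M M^{\mathrm{T}} \nabla_{\mathbf{x}} p_t),
\end{align*}
then derive the Fokker--Planck equation associated to Eq.~\eqref{eq:reverse_diffusion_M},
\begin{align*}
\frac{\partial p}{\partial t} = -\nabla_{\mathbf{x}} \cdot (\bar g_t^2 M M^{\mathrm{T}} \nabla_{\mathbf{x}} \log \bar p_t \, p) + \tfrac{1}{2}\nabla_{\mathbf{x}} \cdot (\bar g_t^2 M M^{\mathrm{T}} \nabla_{\mathbf{x}} p),
\end{align*}
and substitute $p = \bar p_t$. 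Applying the identity $\bar p_t \nabla \log \bar p_t = \nabla \bar p_t$ collapses the two right-hand terms to $-\tfrac{1}{2}\nabla\cdot(\bar g_t^2 M M^{\mathrm{T}} \nabla \bar p_t)$, and the reverse-time identity $\partial_t \bar p_t = -\partial_s p_s|_{s=T-t}$ combined with the companion Fokker--Planck shows this matches $\partial_t \bar p_t$ exactly. Invertibility of $M$ makes $M M^{\mathrm{T}}$ strictly positive definite, so the reverse Fokker--Planck is non-degenerate parabolic and has a uniquely determined solution once initial data are fixed; initialising Eq.~\eqref{eq:reverse_diffusion_M} at $\bar p_0 = p_T$ therefore pins the marginal to $\bar p_t$ for all $t$, yielding $\bar p_T = p_0 = p_{\mathbf{x}^\ast}$ at the terminal time.

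The step I expect to require the most care is the identification of $\bar p_t$ with the reversed marginal of the preconditioned companion corruption, rather than that of the unpreconditioned corruption originally introduced in Sec.~\ref{sec: pre}: without this reinterpretation the cancellation above would only close when $M M^{\mathrm{T}} = I$, i.e.\ when $M$ is orthogonal, and the theorem would degenerate. Making this identification explicit, and justifying that it is consistent with the practical setup where the score network is trained against a correspondingly preconditioned objective, is really the conceptual content of the argument; once it is in place, the Fokker--Planck algebra is mechanical and the invertibility hypothesis enters only to guarantee positive definiteness of $M M^{\mathrm{T}}$, which is needed both for uniqueness of the reverse Fokker--Planck solution and for Anderson's theorem to be applicable.
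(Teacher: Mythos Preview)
Your route differs from the paper's, and the divergence is exactly at the point you flag. The paper does \emph{not} reinterpret $\bar p_t$: throughout, $\bar p_t$ remains the reversed marginal of the original, unpreconditioned corruption $d\mathbf{x}_t=g_t\,d\mathbf{w}_t$, because the whole purpose of the theorem is to justify running Algorithm~\ref{alg:pds} with a score network trained on that original corruption and no retraining. Instead of a direct Fokker--Planck substitution, the paper performs the linear change of variable $\mathbf{y}_t=M^{-1}\mathbf{x}_t$. Applying $M^{-1}$ to both sides of Eq.~\eqref{eq:reverse_diffusion_M} and combining the chain rule $\nabla_{\mathbf{x}}=M^{-\mathrm{T}}\nabla_{\mathbf{y}}$ with the density relation $\bar p_t(\mathbf{x})=\tilde{\bar p}_t(M^{-1}\mathbf{x})\,|\det M^{-1}|$ --- where $\tilde{\bar p}_t$ is the reversed marginal of the transformed forward $d\mathbf{y}_t=g_t M^{-1}\,d\mathbf{w}_t$ started from the law of $M^{-1}\mathbf{x}^\ast$ --- yields an SDE in $\mathbf{y}$ with isotropic noise $\bar g_t\,d\mathbf{w}_t$ and drift $\bar g_t^{2}\,\nabla_{\mathbf{y}}\log\tilde{\bar p}_t$. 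The paper then identifies this as a standard Anderson--Haussmann--Pardoux reverse diffusion and pulls the conclusion back via $\mathbf{x}_T=M\mathbf{y}_T$.

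Your companion preconditioned forward $d\mathbf{x}_t=g_t M\,d\mathbf{w}_t$ together with a redefined $\bar p_t$ would establish a correct but \emph{different} statement --- one requiring the score of a preconditioned corruption, contrary to what Algorithm~\ref{alg:pds} actually evaluates. The idea missing from your plan is precisely this change of coordinates, which is intended to absorb the $MM^{\mathrm{T}}$ factors while keeping the original $\bar p_t$ intact. Your suspicion that the raw Fokker--Planck cancellation fails unless $MM^{\mathrm{T}}=I$ is nonetheless well placed, and it is worth scrutinising the paper's final identification step with that in mind: the transformed forward carries diffusion coefficient $g_t M^{-1}$ rather than $g_t I$, so matching the derived $\mathbf{y}$-SDE to its Anderson reverse is more delicate than the one-line appeal to~\cite{haussmann1986time} suggests.
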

\begin{proof}
Denote $p_{\mathbf{y}^\ast}$ as the probabilistic distribution function of $\mathbf{y} =M^{-1}\mathbf{x}$, where $\mathbf{x}\sim p_{\mathbf{x}^\ast}$.
We consider the corrupting process for $p_{\mathbf{y}^{\ast}}(\mathbf{y})$.
Define the transformed corrupting process (starting from $\mathbf{y}_0\sim p_{\mathbf{y}^\ast}$)
\begin{align}\label{eq:forward_M}
    d\mathbf{y}_t = M^{-1} d\mathbf{w}_t,t\in [0,T].
\end{align}
Denote $\tilde{\bar{p}}_{t}(\mathbf{y})$ as the probabilistic distribution function of the $\mathbf{y}_{T-t}$ in the process Eq.~\eqref{eq:forward_M}.
Denote $\mathbf{y}_t=M^{-1}\mathbf{x}_t$, and {substitute} $\mathbf{y}_t$ into Eq.~\eqref{eq:reverse_diffusion_M}, we have
\begin{align}\label{eq:reverse_diffusion_M2}
       d\mathbf{y}_t= \bar{g}_{t}^2M^T \bigtriangledown_{\mathbf{x}_t}\log \bar{p}_{t}(\mathbf{x}_t) +\bar{g}_{t}d\mathbf{w}_t,t\in[0,T],
\end{align}
where we have operated $M^{-1}$ at both sides of the equation and use the relation $\mathbf{y}=M^{-1}\mathbf{x}$.
We show that Eq.~\eqref{eq:reverse_diffusion_M2} is the reverse process of Eq.~\eqref{eq:forward_M}.
With the change-of-variable relationship of the probabilistic distribution function
\begin{align}\label{eq: chain rule}
    \bar{p}_{t}(\mathbf{x}_t) = \tilde{\bar{p}}_{t}(\mathbf{y}_t)\left |\det\left[ M^{-1}\right] \right |,
\end{align}
and the chain rule $\bigtriangledown_{\mathbf{x}_t} = M^{-T}\bigtriangledown_{\mathbf{y}_t}$
we have
\begin{align*}
    M^{-T}\bigtriangledown_{\mathbf{x}_t}\log \bar{p}_{t}(\mathbf{x}_t) = \bigtriangledown_{\mathbf{y}_t}\log \tilde{\bar{p}}_{t}(\mathbf{y}_t).
\end{align*}
In Eq.~\eqref{eq: chain rule}, $M^{-1}$ is considered as a matrix, and $\det\left[ M^{-1}\right]$ is its discriminant. Therefore, Eq.~\eqref{eq:reverse_diffusion_M2} can be written as
\begin{align*}
      d\mathbf{y}_t= \bar{g}_{t}^2\bigtriangledown_{\mathbf{y}_t}\log \tilde{\bar{p}}_{t}(\mathbf{y}_t) +\bar{g}_{t}d\mathbf{w}_t,t\in[0,T],
\end{align*}
which is exactly the reverse process of Eq.~\eqref{eq:forward_M}~\citep{haussmann1986time}. As a result, the final-state distribution of Eq.~\eqref{eq:reverse_diffusion_M2} obeys $\mathbf{y}_T\sim p_{\mathbf{y}^\ast}$.
Reversing $M$ on Eq.~\eqref{eq:reverse_diffusion_M2}, the final-state distribution of Eq.~\eqref{eq:reverse_diffusion_M} thus obeys $\mathbf{x}_T\sim p_{\mathbf{x}^\ast}$.
\end{proof}
Similar {to} Thm.~\ref{thm: unchange}, Thm.~\ref{thm:unchange2} provides a wide space for designing the preconditioner $M$ while preserving the final-state distribution of $\mathbf{x}_T$ in the reverse diffusion.
\section{Preconditioned diffusion sampling}
Thus far, we have presented the generic preconditioning formulation for both Langevin dynamics and reverse diffusion. Before designing the preconditioned diffusion sampling for image generation, we first test its efficiency through a toy example. 

\subsection{{Toy example}}
We implement Langevin dynamics sampling to generate samples from a two-dimensional Gaussian distribution $\mathcal{N}(\mathbf{0},C)$, where the covariance matrix is $C = \begin{bmatrix} 9.0 & 0.2 \\ 0.2 & 0.5 \end{bmatrix}$. The condition number of the covariance matrix is $18.18$, indicating ill-conditioning. As {a} common preconditioning strategy, we set the preconditioning matrix for this sampling process as $C+I$. As shown in the left and middle of Fig.~\ref{fig:toy}, if we increase the step size, the original Langevin dynamics fail to generate samples of high probability. Whereas the preconditioned Langevin dynamics produces samples with much higher probability with large step size, as shown in the right of Fig.~\ref{fig:toy}.

\begin{figure}[h]
\centering
\includegraphics[width=0.48\textwidth]{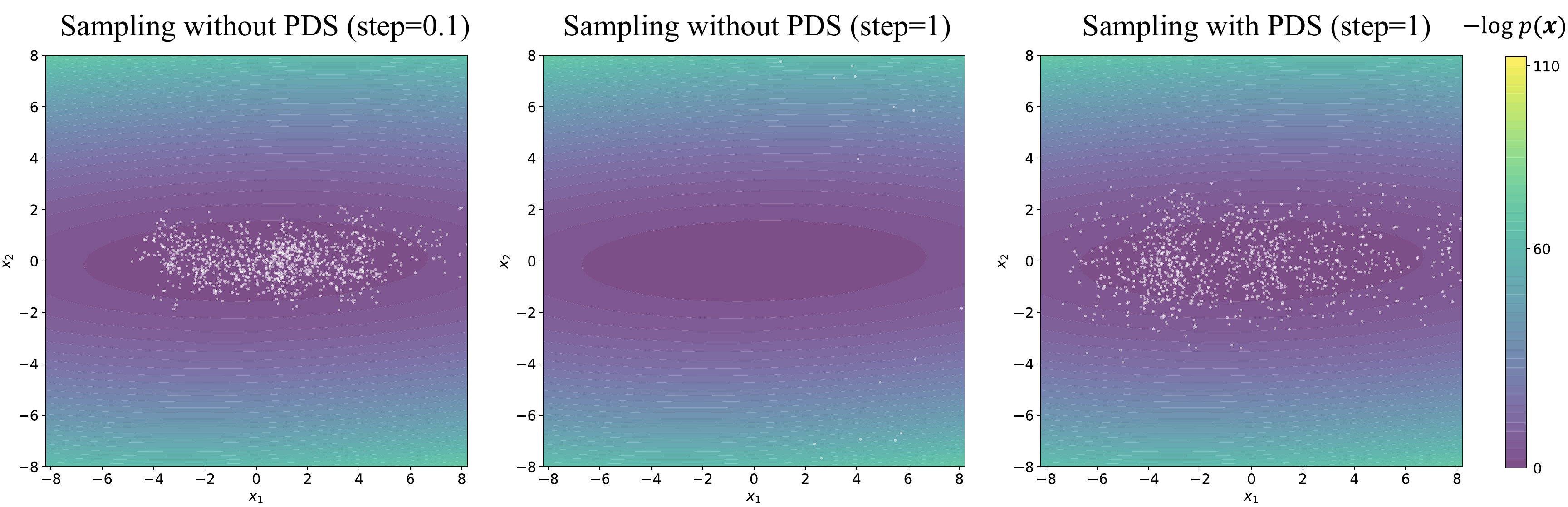}
        \caption{{Sampling on a two-dimensional Gaussian distribution using Langevin dynamics with step size $0.1$ (left), Langevin dynamics with step $1$ (middle) and preconditioned Langevin dynamics with step $1$ (right).}}
        \label{fig:toy}
\end{figure}

{Next, we focus on image generation tasks by designing proper operators. Unlike the toy model, image data are generally high-dimensional. Using the same strategy as the toy model—relying on the data's covariance matrix—would significantly increase computational costs during sampling, as matrix multiplication becomes computationally expensive in high-dimensional spaces. To overcome this, we leverage the structure of image data and design specialized preconditioners that enable efficient computation.} 

\subsection{Image preconditioning strategies}
In numerical optimization, accurate preconditioning operators
is calculated from the inverse of the Hessian matrix of $\log p_{\mathbf{x}^\ast}(\mathbf{x})$ at each time step. This however is extremely costly for high dimensional tasks (\eg, image generation)~\citep{wright1999numerical}. To bypass this obstacle, we adopt a data-centred strategy that leverages the coordinate scale information of target data.

For imagery data, we consider two types of coordinates: individual pixels (\ie, the spacial coordinate) and frequency (\ie, Fourier transformed). The former is useful for images with global structure characteristics, as per-pixel statistics gives prior information (\ie, {\bf \em pixel preconditioner}). 
With natural images, the amplitude of low-frequency components is typically much higher than that of high-frequency ones~\citep{Schaaf1996ModellingTP}.
This structural frequency property could be similarly useful. 
(\ie, {\bf \em frequency preconditioner}). 
For either case, we construct the linear operator $M$ with the mean value of target training data, used to balance all the coordinates (Fig.~\ref{fig:demo}). We set the skew-symmetric operator $S$ (Eq.~\eqref{eq: Langevin_M}) as zero and will discuss it later.
Combining the two preconditioners, we form a linear preconditioning operator $M\in \mathbb{R}^{d}\rightarrow \mathbb{R}^d$ as $M[\cdot] = M_f[M_p[\cdot]]$ with $M_f$ and $M_p$ the preconditioner for frequency and pixel, respectively.
We operate $M$ on the noise term $\mathbf{z}_t$ and adjust the gradient term to keep the steady-state distribution for the Langevin dynamics Eq.~\eqref{eq: reverse} as
\begin{align}\label{eq: accelerate2}
\begin{aligned}
        &\mathbf{x}_{t} = \mathbf{x}_{t-1} +  \frac{\epsilon_t^2}{2}M_p[M_f^{2}[M_p[\bigtriangledown_{\mathbf{x}}\log p_{\mathbf{x}^{\ast}}(\mathbf{x}_{t-1})]]] \\
        &+ \epsilon_t M_f[M_p[\mathbf{z}_t]], 1\leq t \leq T.
\end{aligned}
\end{align}
Similarly, we preserve the final-state distribution for the reverse diffusion Eq.~\eqref{eq:reverse_diffusion} as
\begin{align}\label{eq: accelerate3}
\begin{aligned}
    &\mathbf{x}_t= \mathbf{x}_{t-1} + \bar{g}_{t}^2 M_p[M_f^{2}[M_p[\bigtriangledown_{\mathbf{x}}\log \bar{p}_{t}(\mathbf{x}_{t-1})]]]+\\
    &\bar{g}_{t}M_f[M_p[\mathbf{z}_t]],1\leq t \leq T,
    \end{aligned}
\end{align}
We call this sampling procedure as {\em preconditioned diffusion sampling} (PDS), as summarized in Alg.~\ref{alg:pds}.

\begin{algorithm}[h]
  \caption{Preconditioned diffusion sampling (PDS)}
  \label{alg:pds}
\begin{algorithmic}
  \State {\bfseries Input:} 
  The frequency $M_f$ and pixel $M_p$ preconditioner, the sampling iterations $T$, the model $\mathbf{s}_{\theta}$.  
  \State Defining the preconditioning operator $M[\cdot]=M_f[M_p[\cdot]]$
  \State Drawing $\mathbf{x}_0 \sim \mathcal{N}(\mathbf{0},I)$
  \For{$t=1$ {\bfseries to} $T$} 
        \State \textbf{Predicting (reverse diffusion):} 
        \State Drawing $\mathbf{z}_t\sim\mathcal{N}(\mathbf{0},I)$
        \State $\mathbf{x}'_t\leftarrow \mathbf{x}_{t-1} + \bar{g}_{t}^2 M[M^{\mathrm{T}}[\mathbf{s}_{\theta}(\mathbf{x}_{t-1},\bar{g}_{t})]] +\bar{g}_{t}M[\mathbf{z}_t]$
        \State \textbf{Correcting (Langevin dynamics):} 
        \State Drawing $\mathbf{z}_t\sim\mathcal{N}(\mathbf{0},I)$
          \State $\mathbf{x}_{t} \leftarrow \mathbf{x}'_{t} + \frac{\epsilon_t^2}{2}M[M^{\mathrm{T}}[\mathbf{s}_{\theta}(\mathbf{x}'_{t},\epsilon_t) ]]+\epsilon_tM[\mathbf{z}_t]$
  \EndFor
  \State {\bfseries Output:} $\mathbf{x}_T$
\end{algorithmic}
\end{algorithm}

\subsection{Constructing image preconditioners}\label{sec:filter}
In this section, we design the frequency $M_f$ and pixel $M_p$ preconditioners. We denote an {image training dataset as $\mathcal{D} = \{\mathbf{x}^{(i)}\in\mathbb{R}^{C\times H\times W},i=1,\dots,N\}$}, where $C,H$ and $W$ are the channel number, height and width of images, respectively.

\subsubsection{Frequency preconditioner $M_f$}\label{sec:freq_filter}
Using the frequency statistics of the training set, we design the frequency preconditioner $M_p$ as
\begin{align}\label{eq:stats1}
    M_f[\mathbf{x}] = F^{-1}[F[\mathbf{x}] \bullet R_f],\quad\forall \mathbf{x}\in\mathbb{R}^{C\times W\times H}
\end{align}
where $F$ is Discrete Fourier Transform, $\bullet $ the element-wise division and the frequency mask $R_f\in\mathbb{R}^{C\times W\times H}$ defined as
\begin{align}\label{eq:freq_stats_R_f}
    R_f =  \log\big(\mathbb{E}_{\mathbf{x}\sim p^\ast(\mathbf{x})}\left [ F[\mathbf{x}]\odot \overline{F[\mathbf{x}]} \right]+1\big)
\end{align}
where $\odot $ is the element-wise multiplication. For better stability, we further normalize the mask as
\begin{equation}\label{eq:stats2}
   R_f (c,w,h) \leftarrow \frac{1}{\alpha}(\frac{R_f (c,w,h)}{\max_{c,w,h} R_f (c,w,h) }+\alpha-1),
\end{equation}
where $\alpha\in[1,+\infty)$ is the normalization parameter. The intuition is that the mask $M_f$ shrinks the frequency coordinates with larger scales more than the others so that the scales of all coordinates can become similar. This mitigates the ill-conditioned issue in the frequency space, as mentioned earlier.

\subsubsection{Pixel preconditioner $M_p$}\label{sec:pixel_filter}
With the pixel statistics, we similarly design the pixel preconditioner $M_p$ that operates in the pixel space as follows:
\begin{align*}
   M_p[\mathbf{x}] = \mathbf{x}\bullet R_p,\quad\forall \mathbf{x}\in\mathbb{R}^{C\times W\times H}
\end{align*}
with the pixel mask $R_p\in\mathbb{R}^{C\times W\times H}$ defined as
\begin{align}\label{eq:pixel_stats_R_p}
    R_p = \log\Big(\mathbb{E}_{\mathbf{x}\sim p^\ast(\mathbf{x})} \left[\mathbf{x} \odot \mathbf{x}  \right]+1\Big).
\end{align}
We also apply normalization as
\begin{align}\label{eq:r_p_soft}
  R_p(c,h,w) \leftarrow  \frac{1}{\alpha}\Big(\frac{R_p(c,h,w)}{\max_{c,h,w}R_p(c,h,w)}+\alpha-1 \Big)
\end{align}
with the normalization parameter $\alpha$ same as in Eq.~\eqref{eq:stats2}. 
For the tasks without clear pixel structure characteristics, we do not apply this pixel preconditioner by setting all the elements of $R_p$ to $1$ (\ie, an identity operator).

\subsubsection{Computational complexity}\label{sec:complexity}
{Only a small set of images is needed} to 
estimate the statistics for Eq.~\eqref{eq:freq_stats_R_f} and Eq.~\eqref{eq:pixel_stats_R_p}.
In practice, we use $200$ random images with marginal cost.
For the computational complexity of PDS during sampling, the major overhead is from FFT~\citep{brigham1988fast} and its inverse with the complexity of $O(CHW(\log H+\log W))=O(d(\log H+\log W))$.
This is neglectable w.r.t the original cost (see Table~\ref{tab: time}).
\section{Further Analysis}

\subsection{Gradient descent perspective}
We can understand the preconditioning {of} the Langevin dynamics from the gradient descent perspective. Given the target data distribution $p_{\mathbf{x}^\ast}$, we define the energy function ({\it a.k.a.} surprise in cognitive science~\citep{friston2010free}) as $E(\mathbf{x}) = -\log p_{\mathbf{x}^\ast} (\mathbf{x}),\quad \forall \mathbf{x}\in\mathbb{R}^d$. Substituting the definition of $E$ into Eq.~\eqref{eq: reverse}, we have
\begin{align*}
     \mathbf{x}_{t} = \mathbf{x}_{t-1} -  \frac{\epsilon_t^2}{2}\bigtriangledown_{\mathbf{x}}E(\mathbf{x}_{t-1}) + \epsilon_t \mathbf{z}_t, 1\leq t \leq T.
\end{align*}
Clearly, the Langevin dynamics is implementing the gradient descent algorithm for minimizing the energy of the sample point $\mathbf{x}_t$, along with an additive noise term $\mathbf{z}_t$. This optimizer is called stochastic gradient Langevin dynamics (SGLD)~\citep{welling2011bayesian}. Unlike stochastic gradient descent, the randomness of SGLD comes from the noise $\mathbf{z}_t$, instead of drawing from the training set. Besides, SGMs sequentially reduce the energy ($-\log p_{\mathbf{x}^{\ast}}(\mathbf{x})$) of a sample $\mathbf{x}_t$ via SGLD.

\subsection{Skew-symmetric operator}
In the gradient descent perspective,  $S\bigtriangledown_{\mathbf{x}}E(\mathbf{x}_{t-1})dt$ of Eq.~\eqref{eq: Langevin_M} is known as {the} solenoidal term, since it corresponds to the direction along which the energy does not change (\ie, $ \langle \bigtriangledown_{\mathbf{x}}E(\mathbf{x}_{t-1}), S\bigtriangledown_{\mathbf{x}}E(\mathbf{x}_{t-1})\rangle=0$). Seemingly it imposes no effect on acceleration. Several studies \citep{hwang2005accelerating,ottobre2016markov,rey2015irreversible,lelievre2013optimal} suggest that it can help reach the steady state distribution faster.

For completeness, we discuss the possibility of using {the} skew-symmetric operator $S$ for accelerating the Langevin dynamics.
According to~\citep{hwang2005accelerating}, under the regularity condition that $\left | \mathbf{x}_t\right |$ does not reach the infinity in a finite time, the convergence of an {\em autonomous} (the right side of the equation does not contain time explicitly) Langevin dynamics
\begin{align*}
    d\mathbf{x}_t = \frac{\epsilon_t^2}{2}\bigtriangledown_{\mathbf{x}_t}\log p_{\mathbf{x}^{\ast}}(\mathbf{x}_t)dt + \epsilon_t d\mathbf{w}_t,
\end{align*}
where $\epsilon_t=\epsilon$ is a constant, can be accelerated as
\begin{align*}
    d\mathbf{x}_t = \frac{\epsilon_t^2}{2}\bigtriangledown_{\mathbf{x}_t}\log p_{\mathbf{x}^{\ast}}(\mathbf{x}_t)dt +C(\mathbf{x}_t)dt  + \epsilon_t d\mathbf{w}_t,
\end{align*}
where the vector field $C(\mathbf{x})\in \mathbb{R}^d \rightarrow \mathbb{R}^d$ satisfies
\begin{align*}
    \bigtriangledown_{\mathbf{x}}\cdot(\frac{C(\mathbf{x})} {p_{\mathbf{x}^{\ast}}(\mathbf{x})}) = 0.
\end{align*}
$C(\mathbf{x}) =S\bigtriangledown_{\mathbf{x}}\log p_{\mathbf{x}^{\ast}}(\mathbf{x})$ satisfies the above condition.
However, the Langevin dynamics of existing SGMs is typically {\em not autonomous}, as the step size $\epsilon_t$ varies across time. Despite this, it is still worth investigating the effect of $S$ (see Sec.~\ref{sec: exp}).

\subsection{Parameter selection}\label{sec:theory_relation}
We investigate how to set properly $\alpha$ of Eq.~\eqref{eq:r_p_soft} and Eq.~\eqref{eq:stats2}.
Increasing $\alpha$ will weaken the preconditioning effect; When $\alpha \rightarrow +\infty$, $M_f$ and $M_p$ both become the identity operator.
Empirically, we find that for higher iterations $T$, higher $\alpha$ is preferred (Table~\ref{tab:pds_para}).
We aim to quantify the relation between $\alpha$ and $T$. 
By Thm.~\ref{thm:unchange2}, the preconditioned reverse diffusion (Eq.~\eqref{eq: accelerate3}) reverses the transformed forward process (Eq.~\eqref{eq:forward_M}), instead of the original one (Eq.~\eqref{eq:forward_diffusion_con}) used in model training. 
The deviation between Eq.~\eqref{eq:forward_M} and Eq.~\eqref{eq:forward_diffusion_con}
brings {a} challenge for the model to fit the score function, as the noisy input distribution deviates from what is exposed to model training at each step.
We assume the error per iteration grows linearly with this deviation, proportional to
\begin{align*}
    \left \| M_p[M_f[\cdot]]-I[\cdot]\right\|,
\end{align*}
where $I$ is the identity operator, and $\left \| \cdot\right\|$ is the $l_2$ norm of the linear operators in $\mathbb{R}^{d}\rightarrow \mathbb{R}^{d}$. 

With Eq.~\eqref{eq:stats2} and \eqref{eq:r_p_soft},
in case of large $\alpha$ we have 
\begin{align}\label{eq:q:expansion2}
    M_f(\alpha) \sim I + O(\frac{1}{\alpha})I,
    \;\;\;
    M_p(\alpha) \sim I + O(\frac{1}{\alpha})I.
\end{align}
By design, the elements of $R_f$ (Eq.~\eqref{eq:freq_stats_R_f}) and $R_p$ (Eq.~\eqref{eq:pixel_stats_R_p}) vary moderately.
Thus the accumulated error of the whole sampling process is $O(T\left \| M_p[M_f[\cdot]]-I[\cdot]\right\|)$.
With Eq.~\eqref{eq:q:expansion2}, this error scale can be written
\begin{align*}
    T\sqrt{d}[(1+O(\frac{1}{\alpha}))(1+O(\frac{1}{\alpha})) -1 ].
\end{align*}
Omit high-order terms, it could be simplified as
\begin{align}\label{eq:relation_1}
        T\sqrt{d}[(1+O(\frac{1}{\alpha}))(1+O(\frac{1}{\alpha})) -1 ] = O(1).
\end{align}
This implies that there exists constants $c_1,c_2,a$ and $b$  ($c_1,c_2,a\neq 0$) independent from $\alpha$ and $T$ such that
\begin{align}\label{eq:linear_1}
    (1+\frac{c_1}{\alpha})(1+\frac{c_2}{\alpha}) -1  = a\frac{1}{T}+b.
\end{align}
In case of only applying the frequency preconditioner $M_f$, Eq.~\eqref{eq:relation_1} becomes 
\begin{align}\label{eq:relation_2}
        T\sqrt{d}[1+O(\frac{1}{\alpha}) -1 ] = O(1),
\end{align}
and the corresponding linear relation is
\begin{align}\label{eq:linear_2}
    (1+\frac{c_1}{\alpha}) -1  = a\frac{1}{T}+b.
\end{align}
After quantifying this relation by searching the optimal $\alpha$ for only a couple of different $T$, we can directly estimate $\alpha$ for any other $T$ without manual search.
\section{Experiments}\label{sec: exp}
In this section, we first show that PDS can accelerate the sampling process of state-of-the-art SGMs while maintaining the sampling quality and {providing} parameter analysis.
We use CIFAR-10~\citep{krizhevsky2009learning}, CelebA~\citep{liu2015faceattributes} at a resolution of $64\times64$, LSUN (bedroom and church)~\citep{yu2015lsun}  at a resolution of $256\times256$, and FFHQ~\citep{karras2019style} at a resolution of $1024\times1024$.
Note that for all these datasets, the image height and width are identical, \ie, $H=W$. The CelebA and FFHQ datasets are human facial images, with strong pixel-wise structural characteristics. {Hence, we apply both frequency and pixel filters to these two datasets. For other datasets, we only apply the frequency preconditioner.}
We adopt the same preprocessing as~\citep{song2020score}. We evaluate PDS on the state-of-art SGM \ie, NCSN++~\citep{song2020score}.
For a fair comparison, we only replace the vanilla sampling method (Alg.~\ref{alg:original_sample}) with our PDS (Alg.~\ref{alg:pds}) whilst keeping the remaining unchanged during inference for a variety of sampling iterations, without model retraining and other alteration.
Whenever reducing the iterations from the original one $T$ to $T/c$ with a factor $c$ we simultaneously expand the step size from the original one $\epsilon_t$ to $\sqrt{c}\epsilon_t$ for $t=1,\ldots,T$ for a consistent accumulative update of the score function and consistent variance of the added noises.
We use the public codebase of~\citep{song2020score}. We use the released checkpoints for all SGMs except NCSN++ on CelebA ($64\times64$), for which there is no released checkpoint, and we train by ourselves with the released codes instead.
See Appendix for the details of implementation.

\subsection{Evaluating on sampling quality}
We show that PDS can accelerate the sampling process of SGMs (NCSN++) for generating samples on datasets across various resolutions.
See Table~\ref{tab:pds_para} the setting of parameter $\alpha$ for different iterations $T$ by our linear relation (Sec. \ref{sec:theory_relation}).

\begin{table}[h]
\caption{Parameters $\alpha$ of PDS used for constructing preconditioner on NCSN++~\citep{song2020score} under different iterations $T$.
}\label{tab:pds_para}
\centering
\setlength{\tabcolsep}{1mm}{
\begin{tabular}{ccc}
\toprule[1.5pt]
\backslashbox[45mm]{Iterations $T$}{Dataset} & CIFAR-10  & CelebA ($64\times64$) \\ \midrule
1000                 & 50                          & 300                  \\
400                  & 25                          & 40                   \\
200                  & 12                          & 15                   \\
100                  & 5                           & 6                    \\
50                   & 1.8                         & 2.5                  \\ \bottomrule[1.5pt]
\end{tabular}}
\end{table}

\noindent{\bf CIFAR-10 and CelebA.}
In Table~\ref{tab:cifar} and Table~\ref{tab:celeba}, we evaluate the PDS with NCSN++ on CIFAR-10 ($32\times 32$) and CelebA ($64\times 64$) under various iterations $T$.
We provide visual samples on CIFAR-10 and CelebA in Appendix.
The noise schedule defines how noise is progressively added to the data during the forward process. Two commonly used noise schedules are linear and cosine. Unless specified otherwise, the methods presented in Table~\ref{tab:cifar}, Table~\ref{tab:celeba}, and Table~\ref{tab:fid_com} employed a linear noise schedule. Since our baseline, NCSN++~\citep{song2020score}, employs a linear schedule, we also adopt a linear schedule for the PDS evaluation.

Although achieving on-par performance with DDPMs given high iterations, NCSN++ degrades more significantly as the number of iterations decreases. 
Specifically, NCSN++ performs the worst among all methods at both 50 and 100 iterations. Our PDS addresses this limitation while further enhancing performance of the NCSN++ at high iteration. As demonstrated in Table~\ref{tab:cifar} and Table~\ref{tab:celeba}, our PDS achieves the best FID scores at 100 iterations on both the CIFAR-10 and CelebA datasets, while also delivering competitive results at 50 iterations. The reason PNDM~\citep{liu2021pseudo} outperforms our PDS at the 50 iterations is that our base method NCSN++ suffers from significant numerical instability under low-iteration conditions, which cannot be entirely resolved by our PDS. Furthermore, as shown in Table~\ref{tab:fid_com}, our PDS achieves an FID score of 1.99 on CIFAR-10, surpassing prior competitors.

\begin{table*}[h]
\caption{{FID scores
under different iterations  $T$ on CIFAR-10.}}\label{tab:cifar}
\centering
\begin{tabular}{lrrrrr}
\toprule[1.5pt]
\multicolumn{1}{c}{Iterations $T$}   & \multicolumn{1}{c}{50} & \multicolumn{1}{c}{100} & \multicolumn{1}{c}{200} & \multicolumn{1}{c}{400} & \multicolumn{1}{c}{1000} \\ \midrule
NCSN++~\citep{song2020score}          & 306.91                 & 29.39                   & 4.35                    & 2.40                    & 2.23                     \\
\textbf{NCSN++ w/ PDS}               & 4.90                   & \textbf{2.90}           & \textbf{2.53}           & \textbf{2.19}           & \textbf{1.99}            \\
DDPM~\citep{DBLP:conf/nips/HoJA20}    & 66.28                  & 31.36                   & 12.96                   & 4.86                    & 3.17                     \\
DDIM~\citep{song2020denoising}        & 7.74                   & 6.08                    & 5.07                    & 4.61                    & 4.13                     \\
{Analytic-DDPM ({\it linear})}~\citep{bao2022analytic} & 7.25                   & 5.40                    & 4.01                    & 3.62                    & 4.03                     \\
{Analytic-DDPM ({\it cosine})}~\citep{bao2022analytic} & 5.50                   & 4.45                    & 4.04                   & 3.96                    & 4.31                     \\
{Analytic-DDIM ({\it linear})}~\citep{bao2022analytic} & 4.04                   & 3.55                    & 3.39                    & 3.50                    & 3.74                     \\
{Analytic-DDIM ({\it cosine})}~\citep{bao2022analytic} & 6.02                   & 4.88                    & 4.92                    & 5.00                    & 4.66                     \\
{PNDM ({\it linear})}~\citep{liu2021pseudo}   & {3.95}                 & 3.72                    & 3.48                    & 3.57                    & 3.70                     \\ 
{PNDM ({\it cosine})}~\citep{liu2021pseudo}   &  \textbf{3.68}         & 3.53                    & 3.50                    & 3.30                    & 3.26                 \\ \bottomrule[1.5pt]
\end{tabular}
\end{table*}

\begin{table*}[h]
\caption{{FID scores under different iterations $T$ on CelebA ($64\times 64$)}.
}\label{tab:celeba}
\centering
\begin{tabular}{lrrrrr}
\toprule[1.5pt]
\multicolumn{1}{c}{Iterations $T$}   & \multicolumn{1}{c}{50} & \multicolumn{1}{c}{100} & \multicolumn{1}{c}{200} & \multicolumn{1}{c}{400} & \multicolumn{1}{c}{1000} \\ \midrule
NCSN++~\citep{song2020score}          & 438.84                 & 433.98                  & 292.51                  & 7.09                    & 3.33                     \\
\textbf{NCSN++ w/ PDS}               & 13.48                  & \textbf{3.49}           & 3.21           & \textbf{2.95}           & 3.25                     \\
DDPM~\citep{DBLP:conf/nips/HoJA20}    & 115.69                 & 25.65                   & 9.72                    & 3.95                    & 3.16                     \\
DDIM~\citep{song2020denoising}        & 9.33                   & 6.60                    & 4.96                    & 4.15                    & 3.40                     \\
Analytic-DDPM~\citep{bao2022analytic} & 11.23                  & 8.08                    & 6.51                    & 5.87                    & 5.21                     \\
Analytic-DDIM~\citep{bao2022analytic} & 6.13                   & 4.29                    & 3.46                    & 3.38                    & 3.13                     \\
{PNDM}~\citep{liu2021pseudo}     & \textbf{5.69}          & 4.03                    & \textbf{{3.04}}                   & 3.07                    & \textbf{2.99}            \\ 
\bottomrule[1.5pt]
\end{tabular}
\end{table*}

\begin{table*}[h]
\centering
\caption{{Comparison of Inception scores between NCSN++ and NCSN++ with our PDS across varying iterations $T$ on CIFAR-10 and CelebA ($64 \times 64$}).
}\label{tab:inception}
\begin{tabular}{clrrrrr}
\toprule[1.5pt]
\multicolumn{1}{r}{Dataset} & \multicolumn{1}{c}{Methods} & \multicolumn{1}{c}{50} & \multicolumn{1}{c}{100} & \multicolumn{1}{c}{200} & \multicolumn{1}{c}{400} & \multicolumn{1}{c}{1000} \\ \midrule
\multirow{2}{*}{CIFAR-10}   & NCSN++~\citep{song2020score}                      & 1.54                   & 6.99                    & 9.29                    & 9.53                    & 9.73                     \\
                            & {\bf NCSN++ w/ PDS}               & \textbf{9.53}          & \textbf{9.36}           & \textbf{9.42}           & \textbf{9.56}           & \textbf{9.81}            \\ \hline
\multirow{2}{*}{CelebA}     & NCSN++                      & 1.14                   & 1.15                    & \textbf{2.43}           & \textbf{2.54}           & \textbf{2.23}            \\
                            & {\bf NCSN++ w/ PDS}               & \textbf{2.54}          & \textbf{2.47}           & 2.41                    & 2.28                    & 2.22                     \\ \bottomrule[1.5pt]
\end{tabular}
\end{table*}

\begin{table*}[h]
\caption{FID score under various iterations $T$ on ImageNet ($64\times64$).}\label{tab:image_fid}
\centering
\begin{tabular}{lrrrr}
\toprule[1.5pt]
\multicolumn{1}{c}{Iterations $T$}   & \multicolumn{1}{c}{100} & \multicolumn{1}{c}{200} & \multicolumn{1}{c}{400} & \multicolumn{1}{c}{1000} \\ \midrule
NCSN++~\citep{song2020score}          & 422.54                  & 134.17                  & 33.48                   & 18.93                    \\
\textbf{NCSN++ w/ PDS}               & 19.1                    & \textbf{17.08}          & \textbf{16.15}          & \textbf{15.63}           \\
DDPM~\citep{DBLP:conf/nips/HoJA20}    & 45.04                   & 28.39                   & 21.38                   & 17.58                    \\
DDIM~\citep{song2020denoising}        & 18.09                   & 17.84                   & 17.74                   & 17.73                    \\
Analytic-DDPM~\citep{bao2022analytic} & 18.80                   & 17.16                   & 16.40                   & 16.14                    \\
Analytic-DDIM~\citep{bao2022analytic} & \textbf{17.73}          & 17.49                   & 17.44                   & 17.57        \\\bottomrule[1.5pt]           
\end{tabular}
\end{table*}

\begin{table*}[h]
\caption{FID score under various iterations $T$ on COCO.}\label{tab:coco_fid}
\centering
\begin{tabular}{lrrrrrrr}
\toprule[1.5pt]
\multicolumn{1}{c}{Iterations $T$}             & \multicolumn{1}{c}{2000} & \multicolumn{1}{c}{1000} & \multicolumn{1}{c}{500} & \multicolumn{1}{c}{400} & \multicolumn{1}{c}{200} & \multicolumn{1}{c}{100} \\ \midrule
  NCSN++        & 12.41          & 15.82             & 37.71     & 65.39          & 339.59                  & 464.48                           \\
 {\bf NCSN++ w/ PDS} & \textbf{11.76}     & \textbf{12.77}             & \textbf{13.51}       & \textbf{13.63}            & \textbf{15.94}           & \textbf{19.57}                  \\ \bottomrule[1.5pt]
\end{tabular}
\end{table*}

\begin{table}[h]
\caption{{Best FID comparison on CIFAR-10.}}\label{tab:fid_com}
\centering
\setlength{\tabcolsep}{1.5mm}{
\begin{tabular}{llc}
\toprule[1.5pt]
Type &Method        & FID           \\ \midrule
\multirow{2}{*}{GAN}& BigGAN~\citep{brock2018large} & 14.73\\
& StyleGAN2-ADA~\citep{kang2021rebooting}&2.42
\\ \midrule
\multirow{5}{*}{DDPM}&DDPM~\citep{DBLP:conf/nips/HoJA20}          & 3.17          \\
&DDIM~\cite{song2020denoising}          & 4.13          \\
&{Analytic-DDPM ({\it linear})}~\citep{bao2022analytic} & 3.63          \\
&{Analytic-DDPM ({\it cosine})}~\citep{bao2022analytic} & 4.04          \\
&{Analytic-DDIM ({\it linear})}~\citep{bao2022analytic} & 3.39          \\
&{Analytic-DDIM ({\it cosine})}~\citep{bao2022analytic} & 4.66          \\
&{PNDM ({\it linear})}~\citep{liu2021pseudo}          &  3.48        \\
&{PNDM ({\it cosine})}~\citep{liu2021pseudo}          &  3.26        \\
 &DDPM++~\citep{song2020score}        & 2.91          \\
\midrule
\multirow{6}{*}{SGM}&NCSN++~\citep{song2020score}        & 2.23          \\
 &CLD-SGM~\citep{dockhorn2021score}       & 2.23          \\
 &LSGM~\citep{vahdat2021score}          & 2.10          \\
 &INDM~\citep{kim2022maximum}          & 2.28          \\
\rowcolor{Gray}
&NCSN++ w/ PDS & \textbf{1.99} \\ \bottomrule[1.5pt]
\end{tabular}}
\end{table}

\noindent{\bf LSUN~\citep{yu2015lsun}.} 
We test PDS with NCSN++ to generate bedroom and church images at a resolution of $256\times 256$. As shown in Fig.~\ref{fig: lsun_ncsnpp}, our PDS can prevent outputs from being ruined by heavy noises under low iterations. See more visualization in {the} Appendix.

\noindent{\bf FFHQ~\citep{karras2019style}.} 
We test PDS to generate facial images at a resolution of $1024\times 1024$. Similarly, under acceleration the original sampling method suffers from heavy noises and fails to produce recognizable human faces. Our PDS maintains the sampling quality with 66 iterations (Fig.~\ref{fig:ffhq} and Fig.~\ref{fig:ffhq_hq}). 

\noindent {\bf ImageNet~\citep{deng2009imagenet} and COCO~\citep{lin2014microsoft}. } 
We further investigate the effect of PDS on NCSN++ in generating image samples on ImageNet and COCO respectively. As the first ImageNet and COCO evaluation for VE-SDE models, we train an NCSN++ by ourselves at the resolution of 64$\times$64 as the baseline.
As shown in Table~\ref{tab:image_fid}, our PDS yields consistently acceleration effect across a variety of sampling iterations and produces comparable performance compared with NCSN++, DDPM~\citep{DBLP:conf/nips/HoJA20}, DDIM~\citep{song2020denoising}, Analytic-DDPM~\citep{bao2022analytic} and Analytic-DDIM~\citep{bao2022analytic}. Note that our PDS reaches the best FID score (15.63) among all methods. Since there is no available codebase of DDPM, DDIM, Analytic-DDPM, and Analytic-DDIM for COCO dataset generation, we compare our PDS with the baseline NCSN++. As shown in Tab.~\ref{tab:coco_fid}, our PDS also enhances the generation quality of NCSN++ across different sampling iterations on COCO. We showcase more samples in Fig.~\ref{fig:imagenet}-\ref{fig: ncsnpp_lsun} of the Appendix.

\begin{figure}[h]
\centering
\includegraphics[width=0.48\textwidth]{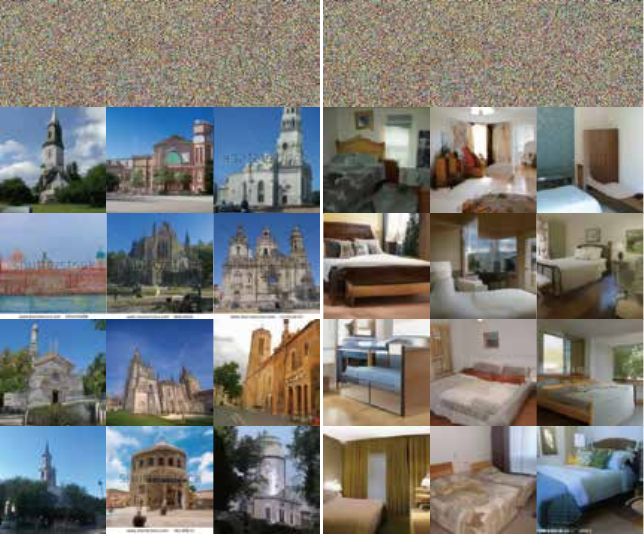}
        \caption{Sampling 
        on LSUN (church and bedroom) ($256 \times 256$). 
        \textbf{Top}: NCSN++~\citep{song2020score}
        \textbf{Bottom}: NCSN++ with PDS.
        Both use $80$ iterations and $\alpha=2.2$. More samples in Appendix.}
        \label{fig: lsun_ncsnpp}
\end{figure}

\subsection{Evaluation on running speed}
We evaluate the effect of running speed {by} using PDS.
In this test, we use one NVIDIA RTX 3090 GPU. We track the average wall-clock time of generating a batch of 8 images. As shown in Table \ref{tab: time}, PDS only introduces marginal extra time complexity per iteration, corroborating our complexity analysis in Sec.~\ref{sec:complexity}.
Importantly, PDS enables low-iteration generation for better efficiency. For example, for the high-resolution generation task on FFHQ ($1024\time1024$), PDS reduces the time cost of generating $8$ human facial samples from $1920$ to $68$ in seconds, \ie, {28}$\times$ speedup.

\begin{table}[h]
\centering
\caption{Comparing the running time \textit{per iteration} of generating a batch of $8$ images.
{\em Time unit:} Seconds.}
\label{tab: time}
\setlength{\tabcolsep}{1.2mm}{
\begin{tabular}{llll}
\toprule[1.5pt]
\multicolumn{1}{l}{Dataset} & \multicolumn{1}{c}{CelebA} & \multicolumn{1}{c}{LSUN}  & \multicolumn{1}{c}{FFHQ} \\ 
\multicolumn{1}{l}{Resolution} & \multicolumn{1}{c}{$64\times64$} & \multicolumn{1}{c}{$256\times256$}  & \multicolumn{1}{c}{$1024\times1024$} \\ \midrule
\multicolumn{1}{l}{NCSN++}     & \multicolumn{1}{c}{0.10}  & \multicolumn{1}{c}{0.51} & \multicolumn{1}{c}{0.96}     \\
\multicolumn{1}{l}{NCSN++ w/ PDS}    & \multicolumn{1}{c}{0.12}      & \multicolumn{1}{c}{0.54}  & \multicolumn{1}{c}{1.03}      \\
\bottomrule[1.5pt]
\end{tabular}}
\end{table}

\subsection{Experiments on other SGMs}
We evaluate the performance of PDS on two more SGMs, \ie, NCSN~\citep{song2019generative} and NCSNv2~\citep{DBLP:conf/nips/0011E20}.
Although the approach of constructing preconditioning operators in Sec.~\ref{sec:filter} works well for accelerating NCSN++~\citep{song2020score}, it is less effective for NCSN~\citep{song2019generative} and NCSNv2~\citep{DBLP:conf/nips/0011E20}. 
A possible reason is that the two models are less capable of exploiting the data statistics. To address this, we construct a simpler version of {the} frequency preconditioner as
\begin{align}\label{eq: freq_mask_a}
R_f(c,h,w) = 
\left\{\begin{matrix}
 1, \quad (h-H/2)^2+(w-W/2)^2 \leq 2 r^2\\
\lambda,\quad (h-H/2)^2+(w-W/2)^2 > 2 r^2
\end{matrix}\right.,
\end{align}
where $C$ is the channel number, $H$ is the height, and $W$ is the width of an image.
$1\leq c \leq C$, $1\leq h \leq H$ and $1 \leq w \leq W$.
The parameter $\lambda > 0 $ specifies the ratio for scaling the frequency coordinates located out of the area $\{(h-0.5H)^2+(w-0.5 W)^2 \leq 2 r^2 \}$. The radial range of the area is controlled by $r$.
See an illustration of the $R_f$ in Fig.~\ref{fig:freq_mask_a}. We do not normalize $R_f$ and directly use it for constructing the corresponding preconditioning operator
\begin{align}\label{eq: freq_mask_a2}
    M_f[\mathbf{x}] = F^{-1}[F[\mathbf{x}] \bullet R_f].
\end{align}
Note that NCSN and NCSNv2 have only {correctors} in the sampling process. We set $\lambda$ so that the operator $M_f$ balances the coordinates of low and high-frequency components of its input via shrinking the scale of low-frequency coordinates.
This is because that the scale of low-frequency coordinates is higher than that of high-frequency coordinates among natural images \citep{Schaaf1996ModellingTP}. As shown in Fig.~\ref{fig: mnist_sm}-\ref{fig: ncsnv2_tower_sm}, PDS can dramatically accelerate the sampling process of NCSN and NCSNv2.

\begin{figure}[h]
\centering
 \includegraphics[width=0.2\textwidth]{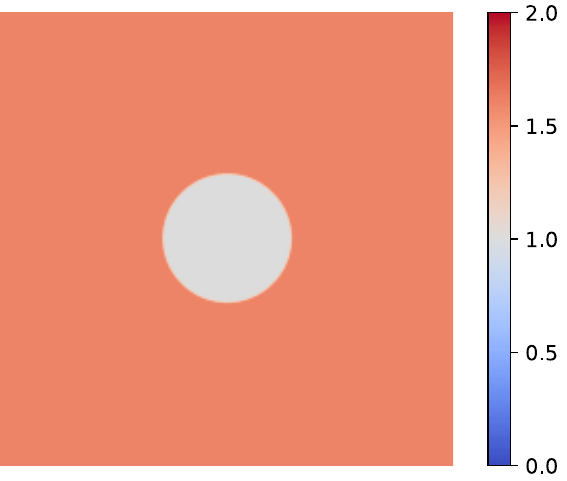}
    \caption{Examples of frequency preconditioning $R_f$ with $(r,\lambda) = (0.1H,1.6)$ in Eq.~\eqref{eq: freq_mask_a2}. 
    }
    \label{fig:freq_mask_a}
\end{figure}

\begin{figure}[h]
    \centerline{\includegraphics[width=0.48\textwidth]{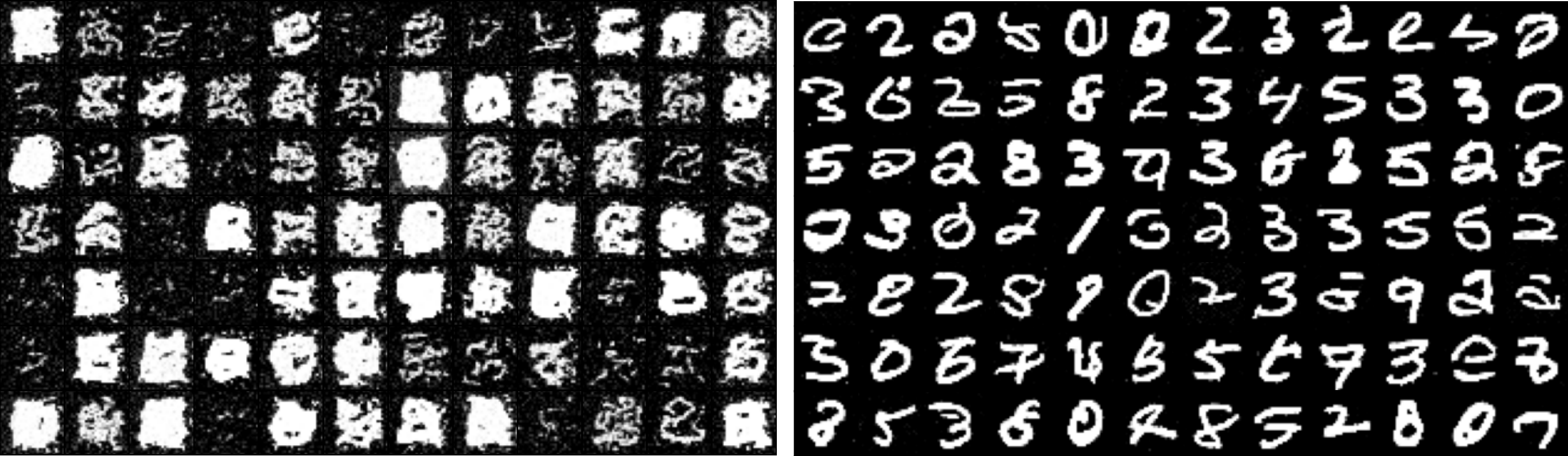}}
    \caption{Sampling using NCSN~\citep{song2019generative} on MNIST. \textbf{Left}: Results by the original sampling method. \textbf{Right}: Results by our PDS ($(r,\lambda)=(0.2H,1.6)$). Iterations: $T=20$.}\label{fig: mnist_sm}
\end{figure}

\begin{figure}[h]
    \centerline{\includegraphics[width=0.48\textwidth]{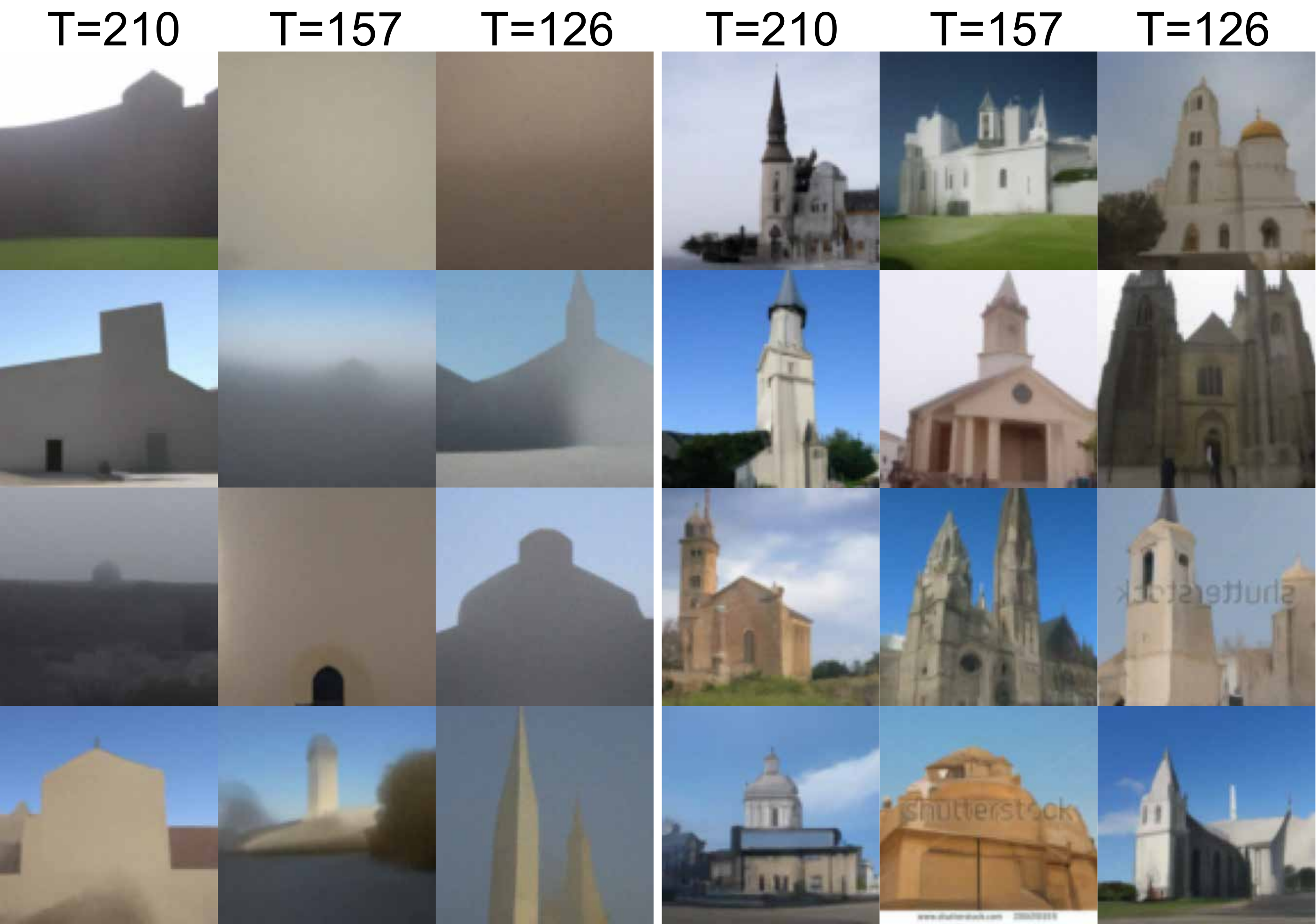}}
    \caption{Sampling using NCSNv2~\citep{DBLP:conf/nips/0011E20} on LSUN church ($96 \times 96$). \textbf{Left}: Results by the original sampling method with different sampling iterations. \textbf{Right}: Results by our PDS ($(r,\lambda)=(0.2H,1.1)$) with different iterations $T$.
        }      
        \label{fig: ncsnv2_church_sm}
\end{figure}

\begin{figure}[h]
    \centerline{\includegraphics[width=0.48\textwidth]{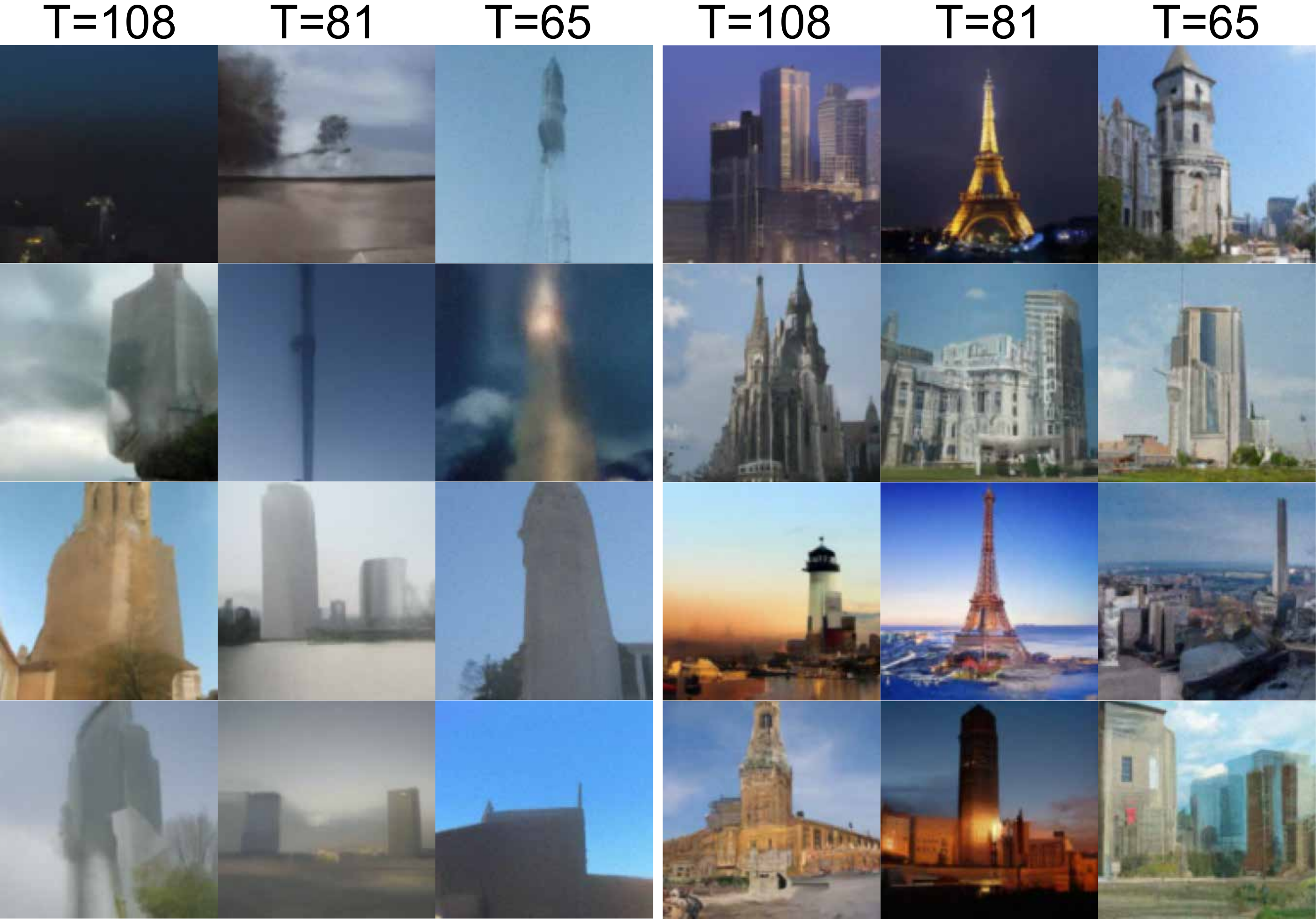}}
    \caption{Sampling using NCSNv2~\citep{DBLP:conf/nips/0011E20} on LSUN tower ($128 \times 128$). \textbf{Left}: Results by the original sampling method with different sampling iterations. \textbf{Right}: Results by our PDS ($(r,\lambda)=(0.2H,1.6)$) with different iterations $T$.
        }      
        \label{fig: ncsnv2_tower_sm}
\end{figure}

\subsection{Compared with subspace diffusion generative models}
The subspace diffusion generative models (SDGMs)~\citep{jing2022subspace} restrict the sampling process of the SGMs on a low-dimensional subspace for acceleration. This might help mitigate the issue of ill-conditioning. We thus compare our PDS with SDGMs in terms of the effect of alleviating ill-conditioning.

We conduct two separate experiments to evaluate their performance. In the first experiment, we investigate the performance of two methods in terms of varying iterations. As shown in Table~\ref{tab:sgdm}, our PDS achieves lower/better FID scores when compared to SDGMs subject to the same iterations.
It is important to highlight that SDGM benefits from a smaller per-iteration cost due to reduced dimensionality, allowing it to execute more iterations within the same runtime.
Hence we conduct a second experiment with a comparison taking into account the runtime and sample quality of the two methods, a different fair evaluation. 
As depicted in Fig.~\ref{fig:sdgm}, our PDS demonstrates the ability to generate higher-quality images (as indicated by a lower FID score) at the equivalent runtime cost when compared to SDGMs. In summary, our preconditioning strategy surpasses the dimension reduction idea of SDGM.

\begin{table}[h]
    \centering
    \caption{Comparison of our PDS and SDGM~\citep{jing2022subspace} on CIFAR-10 generation under varying iterations.}
    \label{tab:sgdm}
    \setlength{\tabcolsep}{1.5mm}{
    \begin{tabular}{ccccccc}
    \toprule[1.5pt]
      Iterations $T$&50    & 100  & 200 & 400  & 500 &1000\\\midrule
      PDS           &\textbf{4.90}  & \textbf{2.90} & \textbf{2.53}& \textbf{2.19} & \textbf{2.11}&\textbf{1.99}\\
      SDGM          &386.70& 64.05& 7.03& 3.03 & 2.70&2.17\\ \bottomrule[1.5pt]
    \end{tabular}}
\end{table}

\begin{figure}[h]
    \centering
    \includegraphics[width=0.4\textwidth]{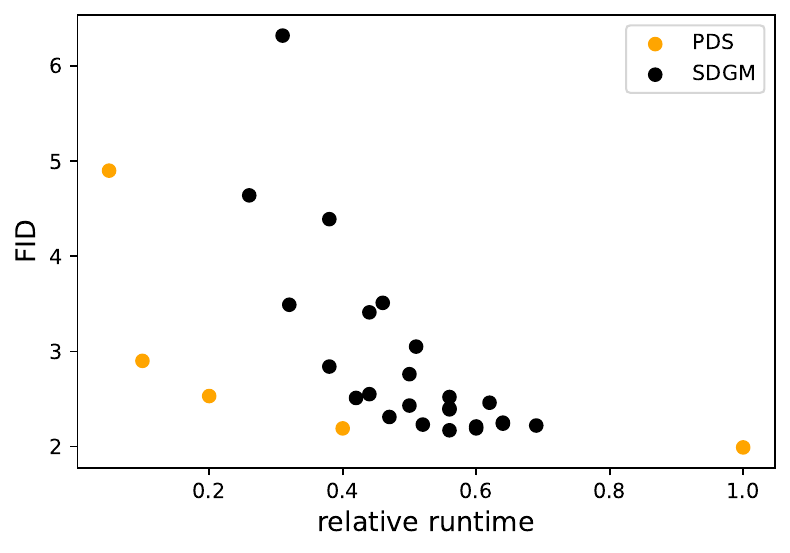}
    \caption{Comparison of our PDS and subspace diffusion
generative models (SDGM)~\citep{jing2022subspace} on CIFAR-10. We report the FID score and the relative runtimes with respect to the vanilla SGM.
    }
    \label{fig:sdgm}
\end{figure}

\section{Further experiments}
\subsection{Effect of PDS for alleviating ill-conditioning}\label{sec:effect}
We empirically investigate the effect of PDS to reduce the ill-conditioning of the original generation process of SGM. To measure the degree of ill-conditioning, we define the following metrics: the coordinate variation and $V_{\mathrm{coo}}$ (variance across coordinates) and coordinate range $R_{\mathrm{coo}}$ (max coordinate value -min coordinate value). For the sample vector $\mathbf{x}\in \mathbb{R}^d$, the coordinate variation $V_{\mathrm{coo}}$ is define as $V_{\mathrm{coo}}=\sum_{i=1}^d (\mathbf{x}_i-\frac{1}{d}\sum_{j=1}^d \mathbf{x}_j)^2$, and the coordinate range $R_{\mathrm{coo}}$ is define as $R_{\mathrm{coo}}=\max_i \mathbf{x}_i - \min \mathbf{x}_i$. Both metrics indicate the gap of the scales in different coordinates, i.e., the degree of the ill-condition issue, with larger $R_{\mathrm{coo}}$ and $V_{\mathrm{coo}}$ indicating more severe ill-condition issue.
We add new experiments, in which we generate some samples through NCSN++ with and without PDS on CIFAR-10, CelebA 64x64, and ImageNet and evaluate the above two metrics. Both metrics are average over all samples and iteration time points for each dataset. As shown in Fig.~\ref{fig:effect}, PDS can reduce both coordinate variation and $V_{\mathrm{coo}}$ coordinate range $R_{\mathrm{coo}}$, hence {alleviating} the ill-condition issue.

\begin{figure}[h]
    \centering
    \includegraphics[width=0.48\textwidth]{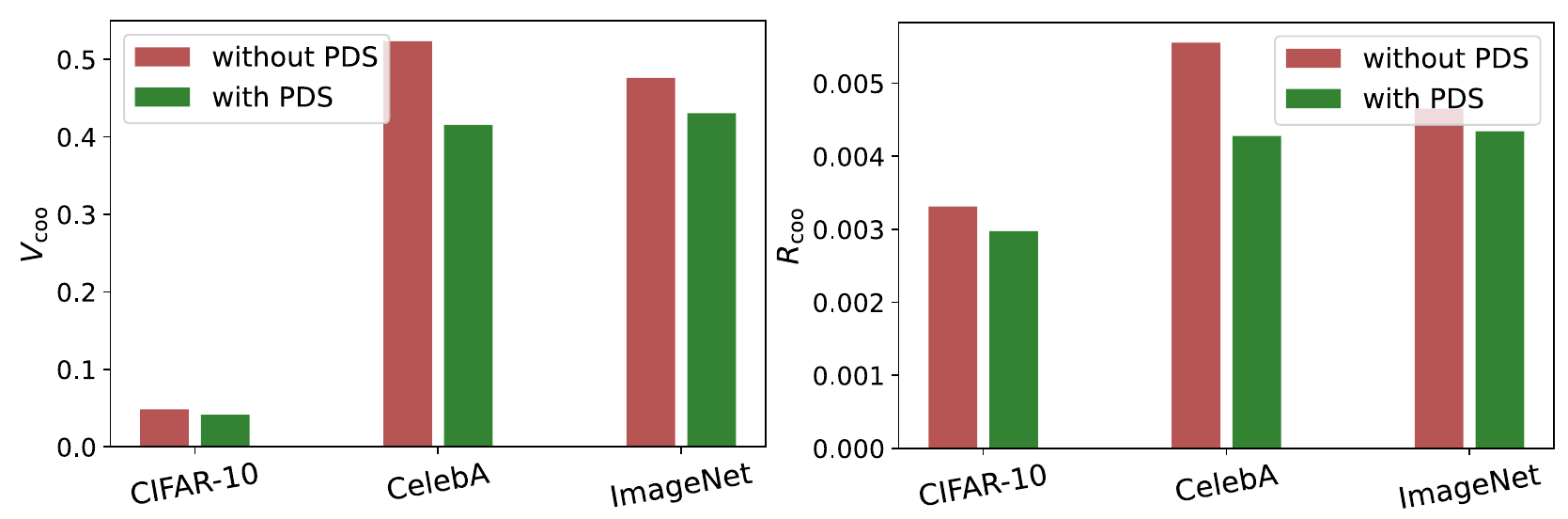}
    \caption{{The coordinate variation $V_{\mathrm{coo}}$ and coordinate range $R_{\mathrm{coo}}$ of NCSN++~\citep{song2020score} with PDS and without PDS for CIFAR-10, ImageNet and CelebA with iterations $T=100$.}}
    \label{fig:effect}
\end{figure}

\subsection{Relation between $\alpha$ and $T$}\label{sec:relation}
The linear relation between $\alpha$ and $T$, as derived in Sec.~\ref{sec:theory_relation}, can ease the estimation of $\alpha$ for different $T$.
To quantify this, we just need to search the optimal $\alpha$ for two different $T$. 
To simplify Eq.~\eqref{eq:linear_1} and~\eqref{eq:linear_2} we set $c_1=c_2=1$ and have
\begin{equation}
    (1+\frac{1}{\alpha})^2 -1  = a_2\frac{1}{T}+b_2, \;\; 
    (1+\frac{1}{\alpha}) -1  = a_1\frac{1}{T}+b_1.
\end{equation}
We test CIFAR-10 with only the frequency preconditioner and CelebA, ImageNet, and COCO ($64\times64$) with both. The results of linear regression are depicted in Fig.~\ref{fig:para_relation}. We obtain the coefficient of determination of 0.895 on CIFAR-10, 0.940 on CelebA($64\times64$), 0.992 on ImageNet ($64\times64$), and 0.804 respectively. This suggests strong linear relations, validating our analysis.
Note, when $(1+\frac{1}{\alpha})^2 -1 $ and $(1+\frac{1}{\alpha}) -1 $ are similar,
the ratio of the slope $a$ will be close to the square root of the dimension ratio.
For instance, the relation between CIFAR-10 and CelebA($64\times64$) fits as
\begin{align*}
    \frac{a_1}{a_2} = \frac{28.53}{50.67} = 0.56 \sim 0.5 = \sqrt{\frac{3 \times 32\times 32}{3 \times 64\times 64}}= \sqrt{\frac{d_1}{d_2}},
\end{align*}
where $a_1,a_2$ are the slopes, and $d_1,d_2$ are their dimensions for {CIFAR-10, CelebA ($64\times64$), and ImageNet ($64\times64$)}, respectively.

\begin{figure}[h]
    \centering
    \includegraphics[width=0.4\textwidth]{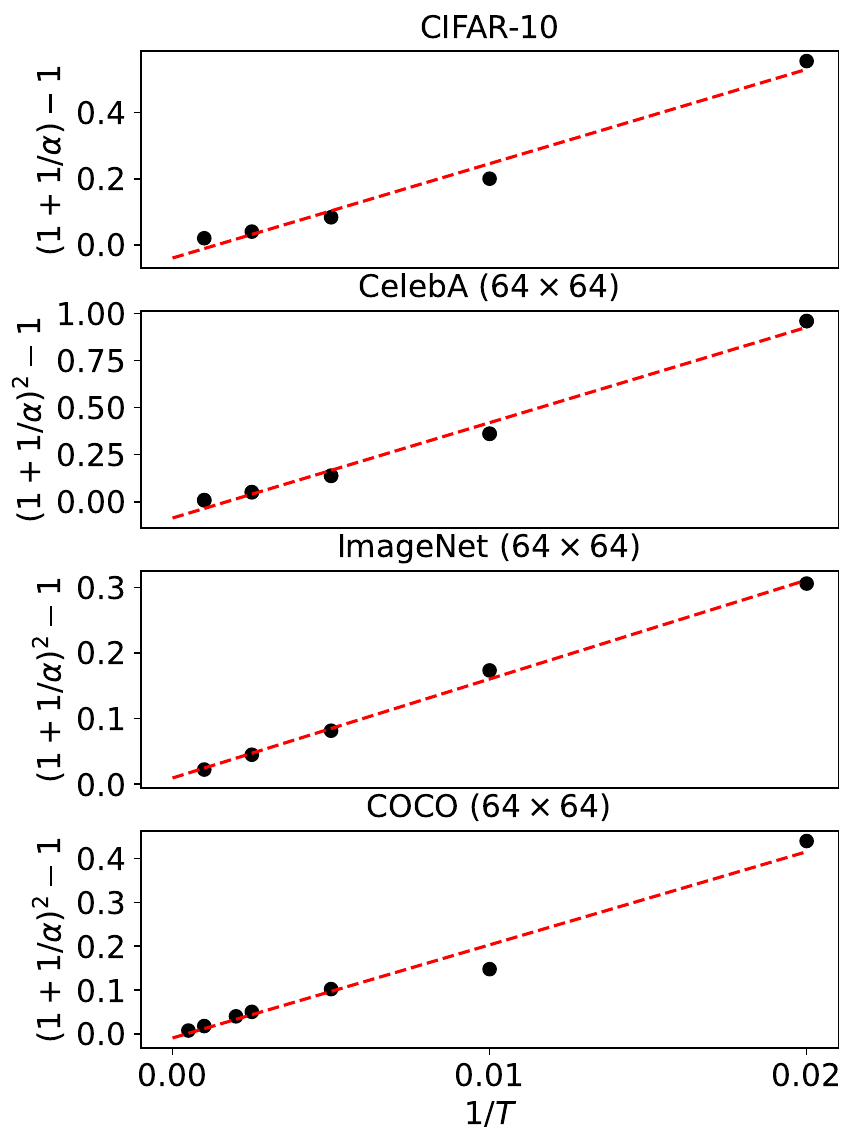}
    \caption{Relation between the iterations $T$ and the parameter $\alpha$. 
    All cases suggest a strong linear correlation.}\label{fig:para_relation}
\end{figure}

\subsection{Sensitivity of parameters}
We evaluate the parameter sensitivity of PDS. We perturb the optimal $\alpha$ (Table~\ref{tab:pds_para}) by $ \pm 10\%$. As shown in Table~\ref{pert_cifar}, the performance is not parameter-sensitive.

\begin{table}[h]
\centering
\caption{
Sensitivity analysis
on of the parameter($\alpha$) on the CIFAR-10.}\label{pert_cifar}
\begin{tabular}{cccc}
\toprule[1.5pt]
Iterations $T$     & before perturbation & +10\% & -10\% \\ \midrule
1000 & 1.99               & 2.00 & 2.01 \\
500  & 2.11               & 2.13 & 2.12 \\
400  & 2.19               & 2.20 & 2.19 \\
200  & 2.53               & 2.56 & 2.58 \\
100  & 2.90               & 3.16 & 3.10 \\
50   & 4.90               & 5.45 & 7.34 \\ \bottomrule[1.5pt]
\end{tabular}
\end{table}

\subsection{Skew-symmetric operator}
\begin{figure}[h]
\centering
    \begin{minipage}{1\textwidth}
    \textcolor{white}{--------}$\omega =1 $\textcolor{white}{-----------}$\omega = 10$\textcolor{white}{----------}$\omega= 100$\textcolor{white}{----------}$\omega = 1000$
    \end{minipage}

    \vspace{.4ex}
    \begin{minipage}{1\linewidth}
    \centering
    \centerline{\includegraphics[scale=0.13]{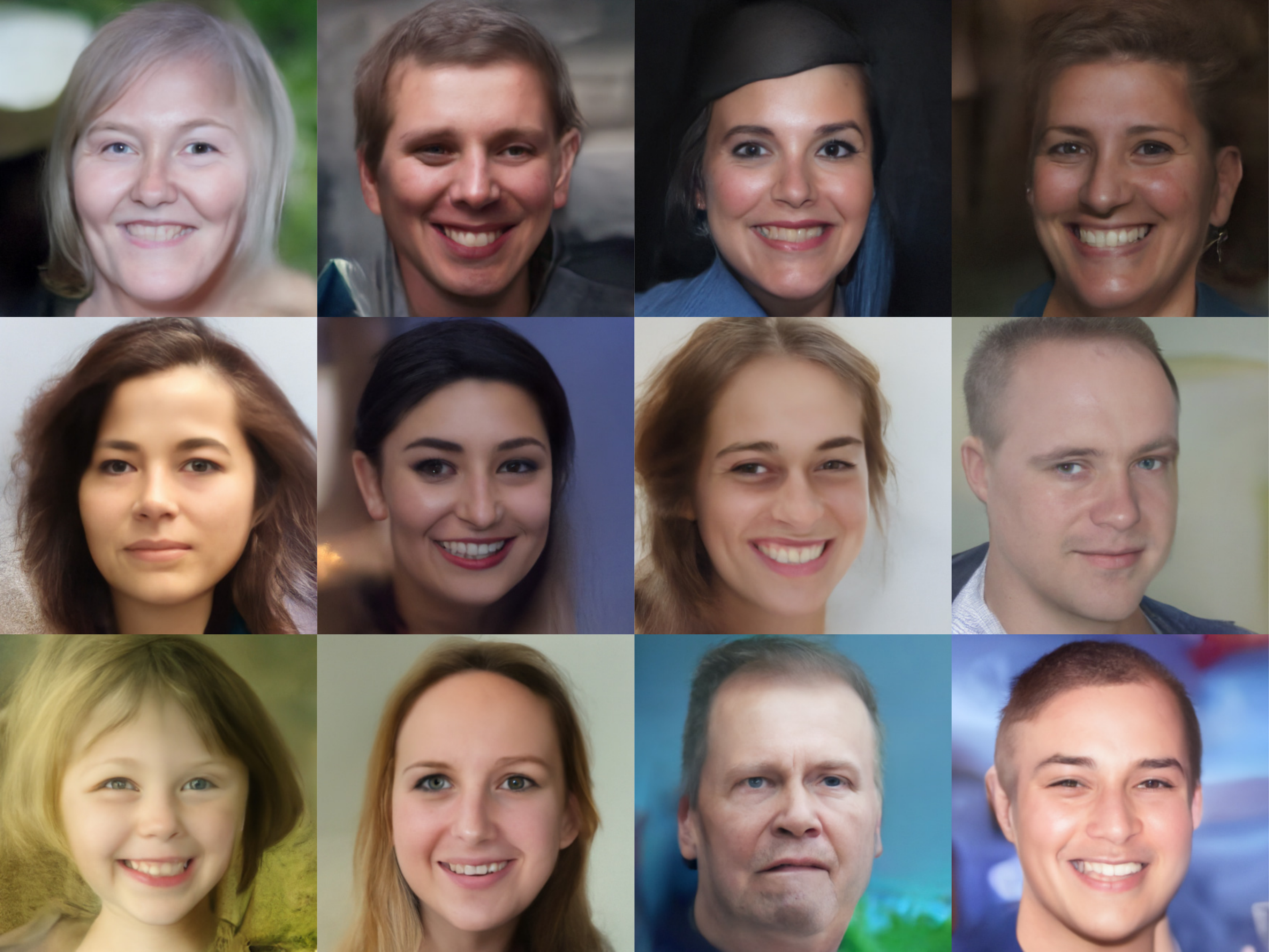}}
    \end{minipage}
        \caption{Samples produced by NCSN++~\citep{song2020score} w/ PDS on FFHQ ($1024$x$1024$)~\citep{karras2019style} with different solenoidal terms. Iterations: $66$. }
        \label{fig: solenoidal}
\end{figure}

We study the effect of the solenoidal term  $S\bigtriangledown_{\mathbf{x}} \log p_{\mathbf{x}^\ast}(\mathbf{x})$ to the Langevin dynamics. As proved in Thm.~\ref{thm: unchange}, as long as $S$ is skew-symmetric, it will not change the steady-state distribution of the original process. To verify this, we consider the process
\begin{align*}
        d\mathbf{x}_t =  \frac{\epsilon_{t}^2}{2}(MM^{\mathrm{T}}+\omega S)\bigtriangledown_{\mathbf{x}_t}\log p_{\mathbf{x}^{\ast}}(\mathbf{x}_t)dt  + \epsilon_tM d\mathbf{w}_t,
\end{align*}
where $\omega$ controls the weight of $S$. We experiment with $S[\cdot] = \mathrm{Re}[F[\cdot] - F^{\mathrm{T}}[\cdot]]$ for being skew-symmetric
and $\omega \in [1,1000]$. Fig.~\ref{fig: solenoidal} shows that $\omega$ has no impact {on} the generation quality. This verifies that $S$ does not change the original steady-state distribution.
We further perform similar tests with different iterations and other different skew-symmetric operator
\begin{align}
    &S_1= P_{1,1} - P_{1,1}^{\mathrm{T}}\\\label{eq:so_start}
    &S_2=P_{10,10} - P_{10,10}^{\mathrm{T}}\\
    &S_3=P_{100,100} - P_{100,100}^{\mathrm{T}}\\
    &S_4=\mathrm{Re}[F[P_{1,1}- P_{1,1}^{\mathrm{T}}]F^{-1}]\\
    &S_5=\mathrm{Re}[F[P_{10,10}- P_{10,10}^{\mathrm{T}}]F^{-1}]\\
    &S_6=\mathrm{Re}[F[P_{100,100}- P_{100,100}^{\mathrm{T}}]F^{-1}],\label{eq:so_end}
\end{align}
where $P_{m,n}$ is the shift operator that rolls the input image for $m$ places along the height coordinate and rolls the input image for $n$ places along the width coordinate.
The sampling results are shown in Fig.~\ref{fig:ffhq_s_ms}, where we set $\omega = 1000$. It is observed that again all these solenoidal terms do not impose an obvious effect on the sampling quality. Additionally, as displayed in Fig.~5 in the supplementary material, these solenoidal terms also do not {have} an obvious effect on acceleration. We leave {a} more extensive study of the solenoidal terms in the further work.

\begin{figure}[h]
    \centering
        \includegraphics[width=0.48\textwidth]{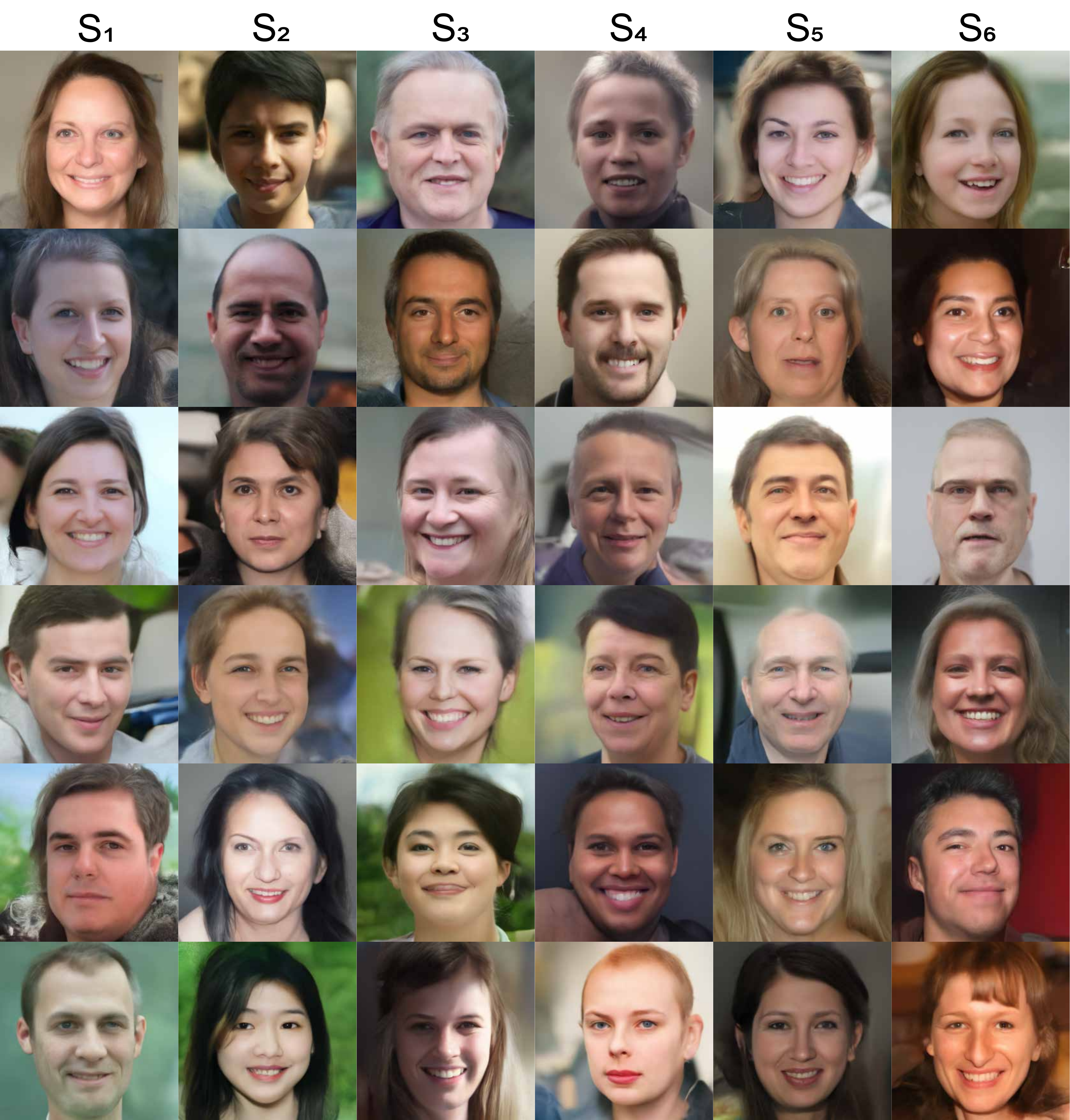}
        \caption{
        Facial images generated by NCSN++~\citep{song2020score}
        with our PDS using different solenoidal items.
        Iterations $T$: $66$.
        Dataset: FFHQ ($1024\times 1024$)~\citep{karras2019style}.
    		}
        \label{fig:ffhq_s_ms}
\end{figure}

\section{Conclusion}
We have proposed a novel preconditioned diffusion sampling (PDS) method for accelerating off-the-shelf score-based generative models (SGMs), without model retraining. Considering the sampling process as a combination of Langevin dynamics and reverse diffusion, we reveal that existing sampling suffers from the ill-conditioned issue. To solve this, we reformulate both the Langevin dynamics and the reverse diffusion with {statistics-based} matrix preconditioning. Theoretically, PDS preserves the steady-state (\ie, target) distribution of the Langevin dynamics and the final-state distribution of the reverse diffusion respectively. We show that PDS significantly accelerates existing state-of-the-art SGMs while maintaining the generation quality, matching or even surpassing representative SGMs and DDPMs.

\begin{appendices}

\section{Implementations}
For CIFAR-10, LSUN (bedroom and church, $256\times256$), and FFHQ ($1024\times1024$), we apply checkpoints of NCSN++ provided in~\citep{song2020score}. We train NCSN++ on CelebA ($64\times 64$) for 600k iterations at the batch size of 64 on two NVIDIA RTX 3090 GPUs. We apply the same hyper-parameters of optimizer and learning schedule as~\citep{song2020score}. For ImageNet and COCO datasets, we train the NCSN++~\citep{song2020score} by ourselves at the resolution of 64$\times$64 as the baseline. We apply the same hyper-parameters of optimizer and learning schedule as~\citep{song2020score}. We train the model for 2400k iterations for ImageNet and 600k for COCO. The batch size is set at 256 for both datasets. For sampling, we use the same generation process of continuous variance exploding (VE)-NCSN++~\citep{song2020score} with both predictor and corrector and all the experiments apply the linear schedule used in~\citep{song2020score} with both predictor and corrector (PC).

\section{Experiments on latent space}
{We assess the performance of our PDS on SGMs in latent spaces. Since existing latent SGMs predominantly rely on VP SDE or ODE frameworks for generation, which are incompatible with PDS, we train custom latent SGMs. For the MNIST dataset, we first train a convolutional variational autoencoder (VAE) to map images into a 384-dimensional latent space, then train an NCSN on the VAE-transformed dataset. Similarly, for the CelebA ($64 \times 64$) dataset, we use a convolutional VAE to transform  images into a 512-dimensional latent space and train an NCSN on this transformed dataset. We apply PDS with $\alpha=1.75$ for MNIST and $\alpha=4$ for CelebA. As shown in Fig.~\ref{fig:latent_mnist} and Fig.~\ref{fig:latent_celeba}, PDS substantially preserves sampling quality, while achieving \(20\times\) and \(10\times\) acceleration for the two datasets, respectively.}

The latent SGM serves as a nonlinear counterpart to the proposed PDS. While latent SGMs rely on a nonlinear VAE for sampling space transformation, PDS employs a linear operator to achieve similar effects. Both methods address ill-conditioning, but they differ in execution: latent SGMs utilize the dimensionality reduction of a VAE, whereas PDS directly applies an efficient linear operator. Latent SGMs, however, require training a VAE and subsequently retraining the SGM for acceleration, whereas PDS bypasses this step by leveraging dataset statistics, enabling seamless integration with existing SGMs without retraining or additional VAE training.
Furthermore, experimental results (Fig.~\ref{fig:latent_mnist} and Fig.~\ref{fig:latent_celeba}) indicate that PDS enhances the sampling quality of latent SGMs. This suggests that the VAE does not fully resolve the ill-conditioning inherent in the latent SGM sampling process.

\begin{figure}[h]
    \centering
    \includegraphics[width=0.48\textwidth]{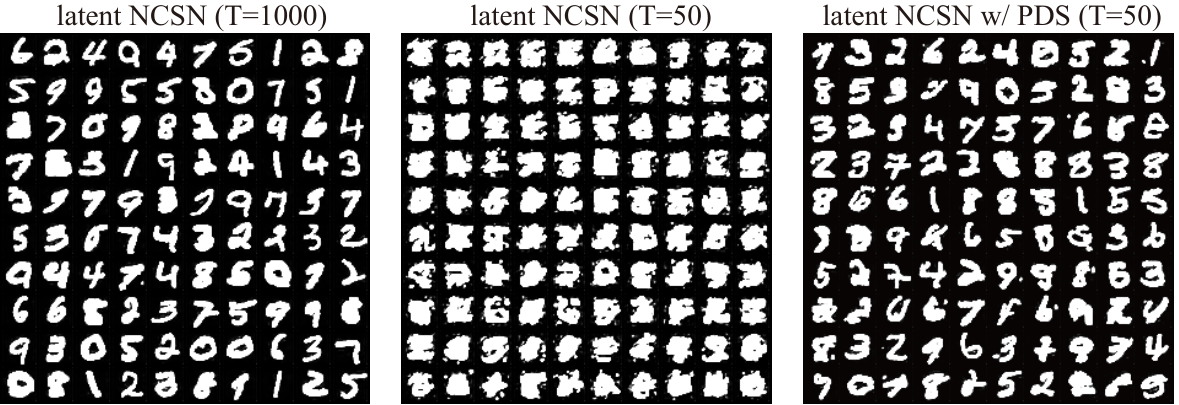}
        \caption{
        Images generated by NCSN~\citep{song2020score} on the latent MNIST with or without our PDS under different iterations $T$.
        }
        \label{fig:latent_mnist}
\end{figure}

\begin{figure}[h]
    \centering
    \includegraphics[width=0.48\textwidth]{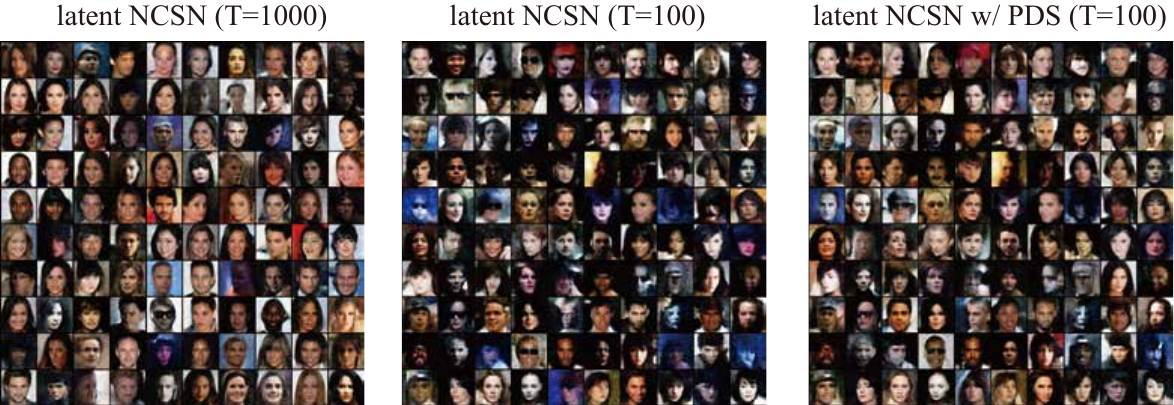}
        \caption{
        Images generated by NCSN~\citep{song2020score} on the latent CelebA ($64\time 64$) with or without our PDS under different iterations $T$.
        }
        \label{fig:latent_celeba}
\end{figure}

\section{More samples}
We provide more samples generated by NCSN++ with PDS in Fig.~\ref{fig:ffhq_s_ms_diff}-\ref{fig: ncsnpp_lsun}.

\begin{figure}[h]
    \centering
    \includegraphics[width=0.43\textwidth]{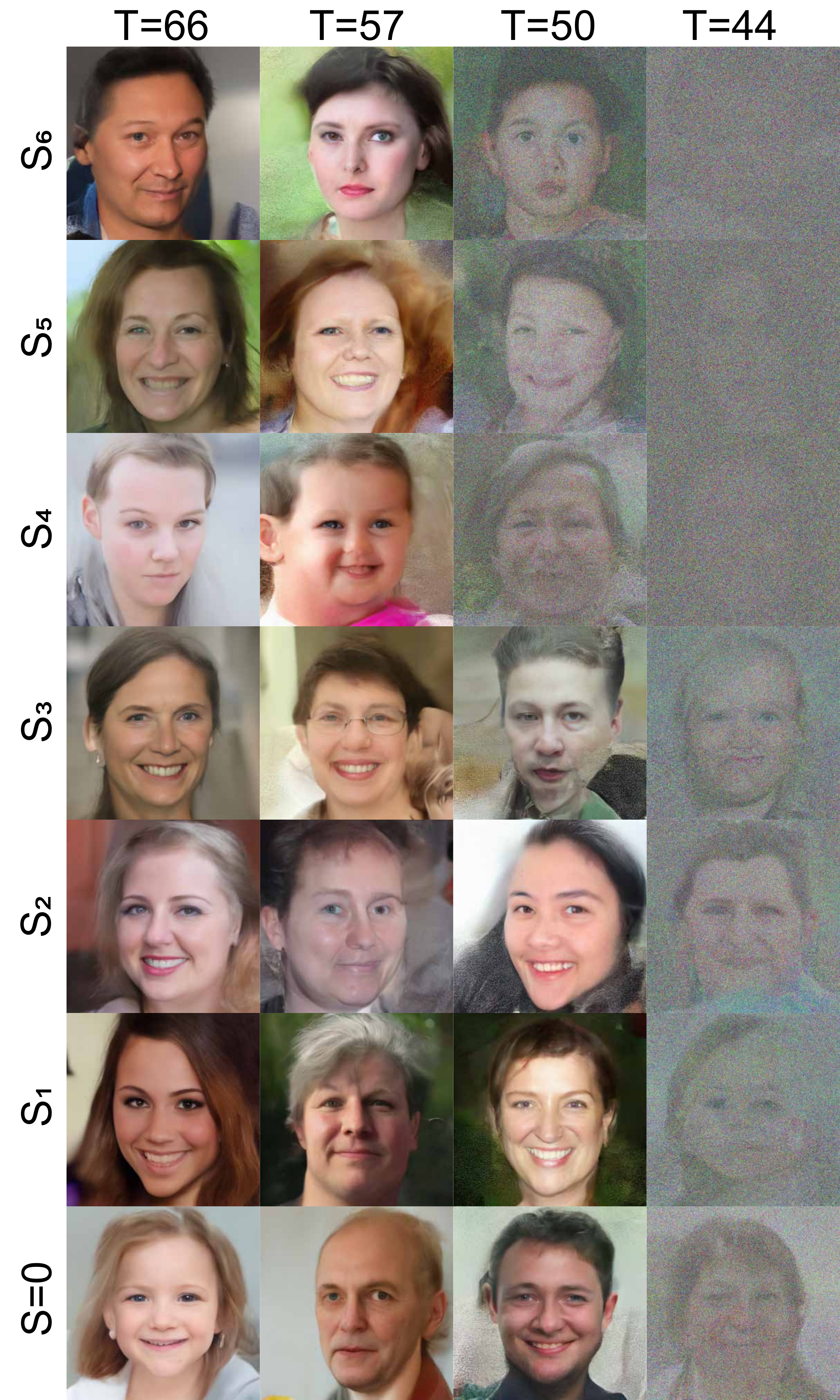}
        \caption{
        Facial images generated by NCSN++~\citep{song2020score} with our PDS under different iterations $T$ and different solenoidal items described in Eq.~(55)-(59). Dataset: FFHQ~\citep{karras2019style} ($1024\times 1024$).
        }
        \label{fig:ffhq_s_ms_diff}
\end{figure}

\begin{figure}[h]
    \centering
    \includegraphics[width=0.47\textwidth]{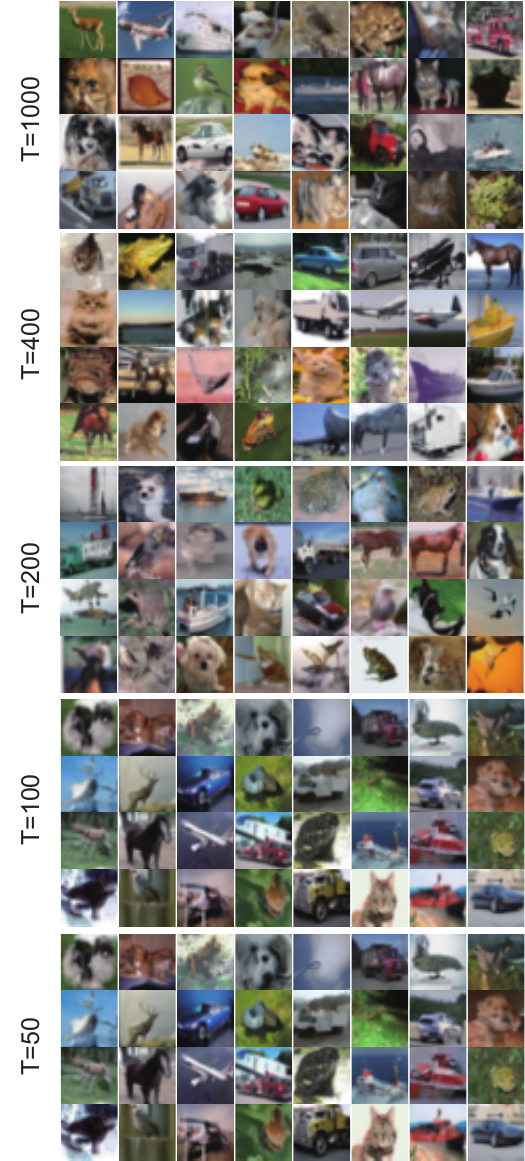}
    \caption{CIFAR-10 samples generated by NCSN++~\citep{song2020score} with PDS under different iterations $T$.}
    \label{fig:cifar10_a}
\end{figure}

\begin{figure}[h]
    \centering
\includegraphics[width=0.45\textwidth]{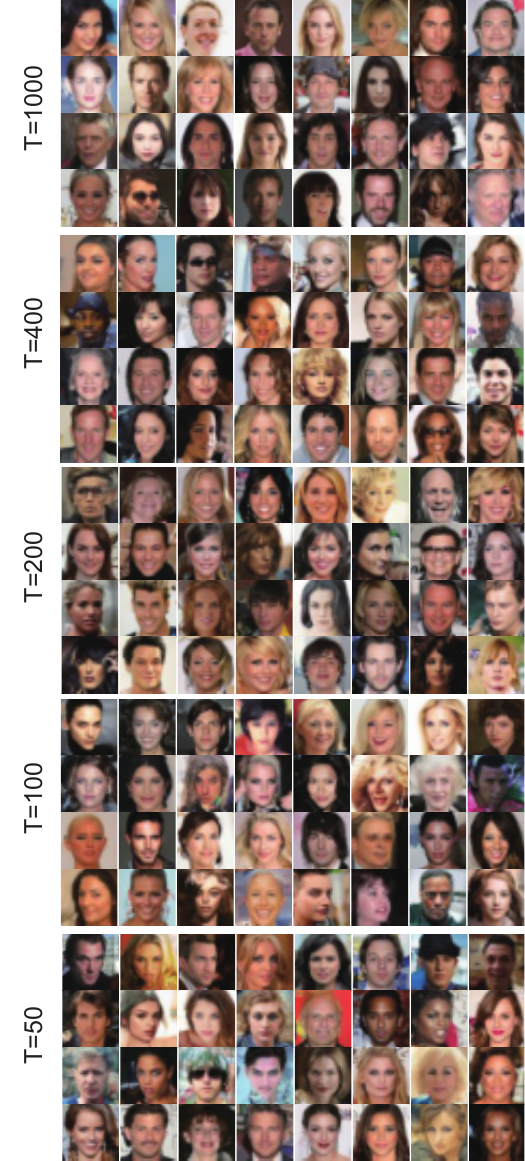}
        \caption{CelebA ($64\times64$) samples generated by NCSN++~\citep{song2020score} with PDS under different iterations $T$.}
    \label{fig:celeba_a}
\end{figure}

\begin{figure*}[h]
    \centering
    \includegraphics[width=\textwidth]{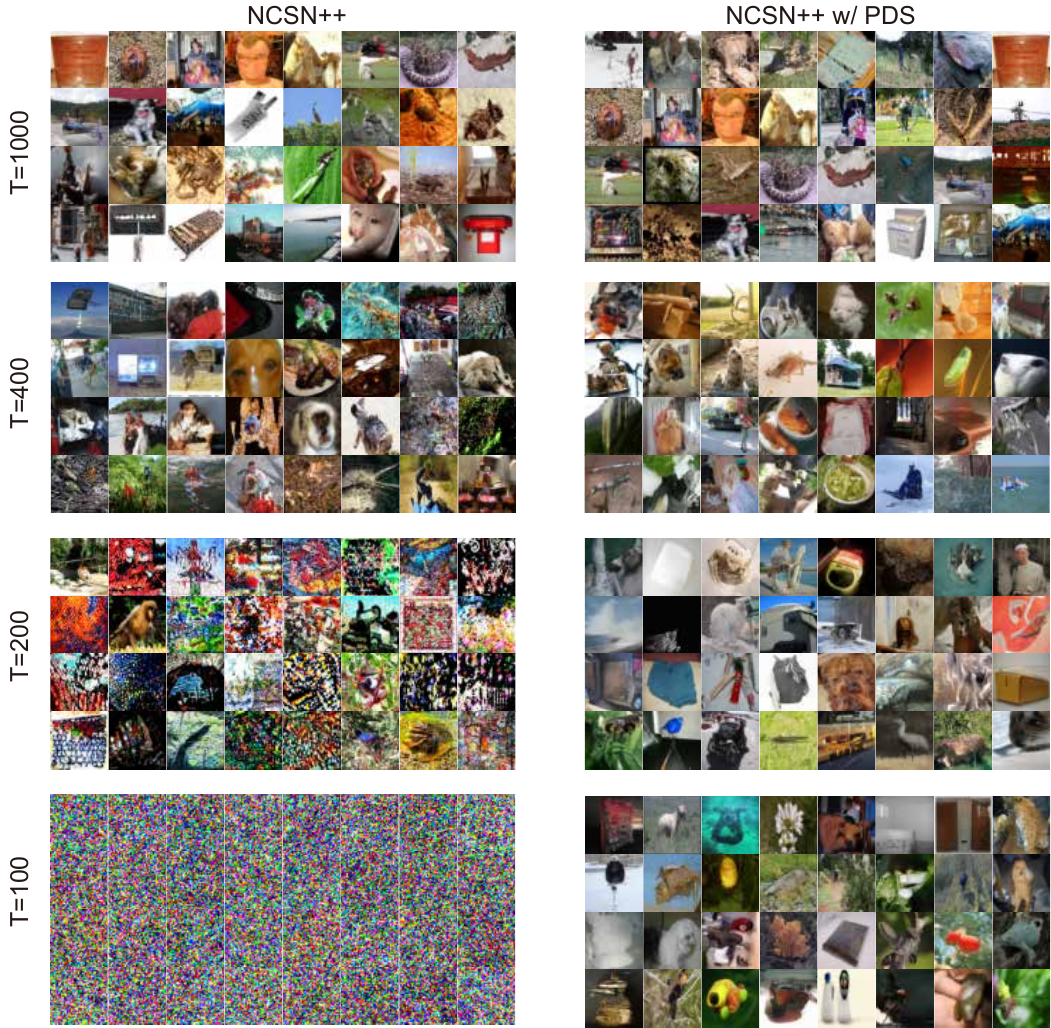}
        \caption{ImageNet ($64\times64$) samples generated by NCSN++~\citep{song2020score} with PDS under different iterations $T$.}
    \label{fig:imagenet}
\end{figure*}

\begin{figure*}[h]
    \centering
    \includegraphics[width=\textwidth]{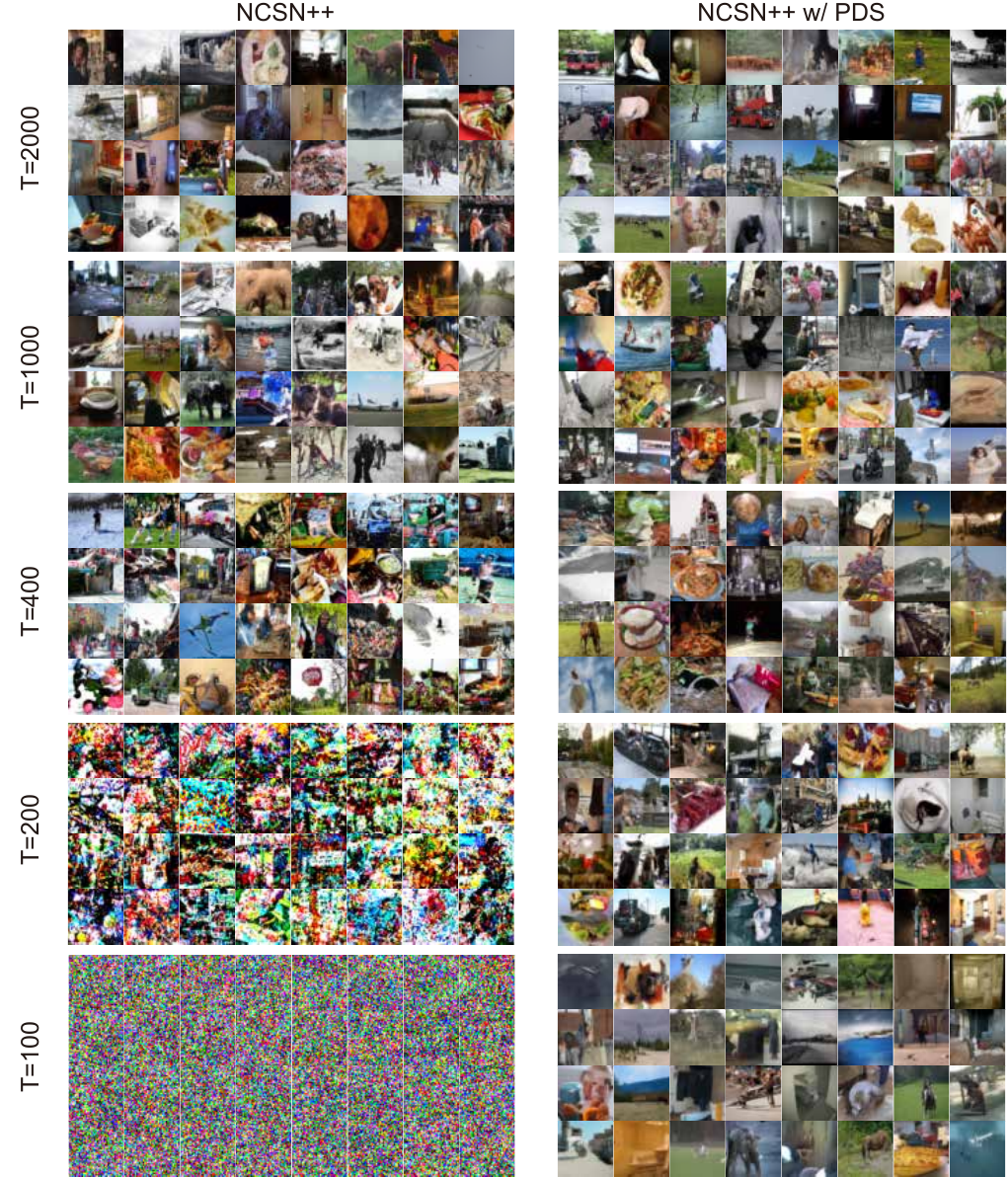}
        \caption{COCO ($64\times64$) samples generated by NCSN++~\citep{song2020score} with PDS under different iterations $T$.}
    \label{fig:coco}
\end{figure*}

\begin{figure}[h]
    \centering
    \centerline{\includegraphics[width=0.48\textwidth]{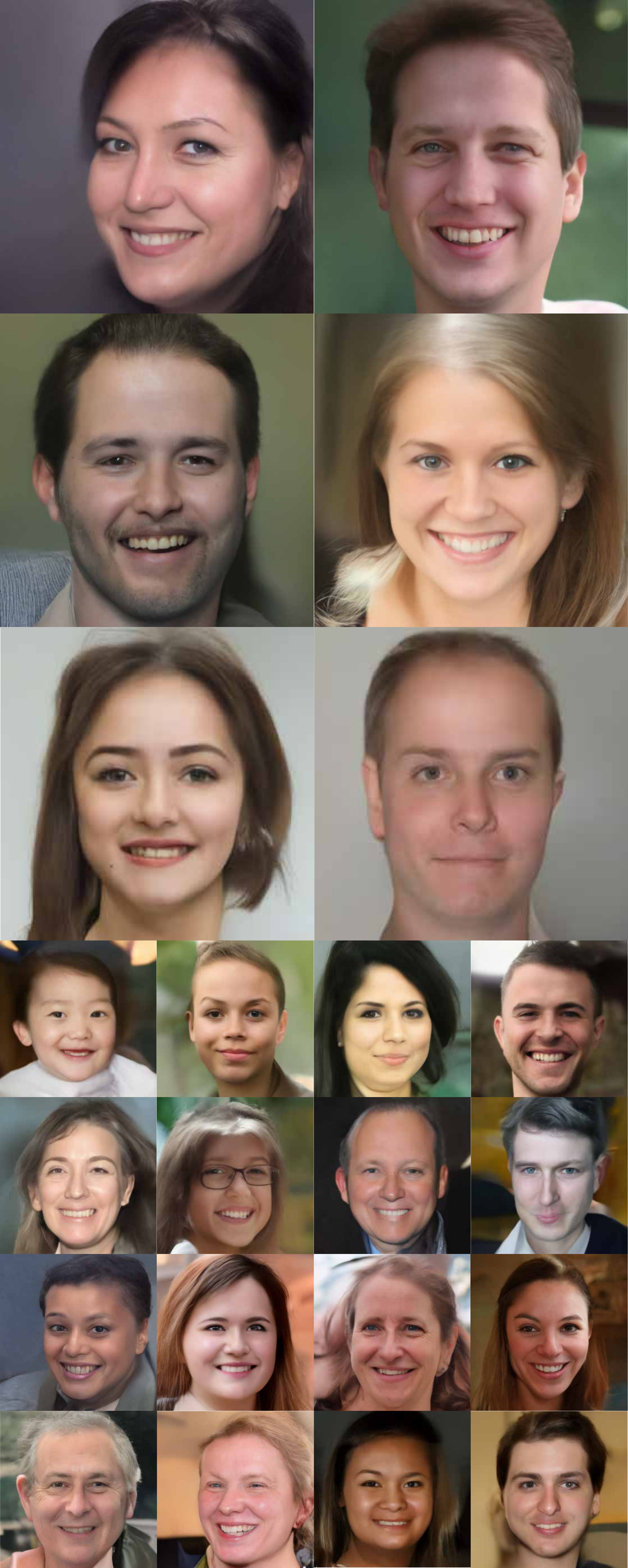}}
    \caption{
    Facial images at a resolution of $1024\times 1024$ generated by NCSN++~\citep{song2020score}
    with our PDS. Sampling iterations: $66$. Dataset: FFHQ~\citep{karras2019style}.
    }
    \label{fig:ffhq_hq}
\end{figure}

\begin{figure}[h]
        \centerline{\includegraphics[width=0.48\textwidth]{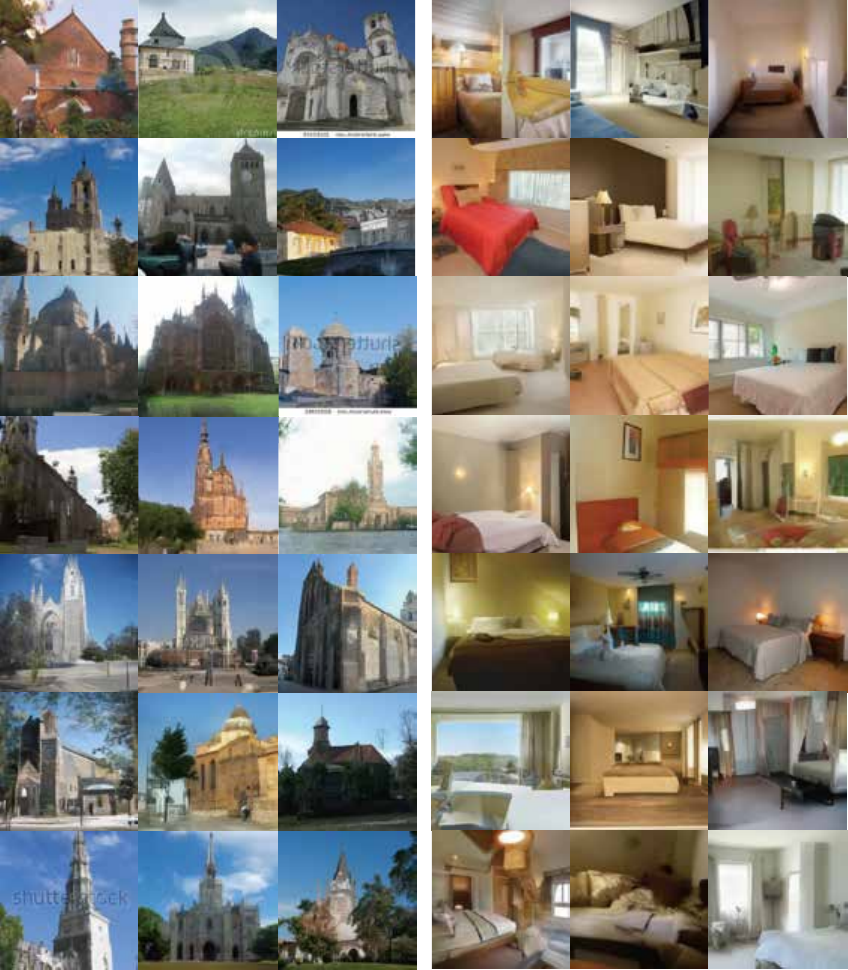}}
       \caption{Sampling using NCSN++~\citep{song2020score} with PDS on LSUN (church and bedroom) at a resolution of $256 \times 256$ under iteration numbers $80$.}
    \label{fig: ncsnpp_lsun}
\end{figure}

\end{appendices}

\section*{Data availability}
The datasets used in the study are available in CIFAR-10~\cite{krizhevsky2009learning}, ImageNet~\cite{deng2009imagenet}, COCO~\cite{lin2014microsoft}, LSUN~\cite{yu2015lsun}, CelebA~\cite{liu2015faceattributes}, and FFHQ~\cite{karras2019style}.

\section*{Acknowledgments}
This work was supported in part by 
Science and Technology Innovation 2030 - Brain Science and Brain-Inspired Intelligence Project (Grant No. 2021ZD0200204) and National Natural Science Foundation of China (Grant No. 62376060).

\clearpage
\bibliographystyle{spbasic}      
\bibliography{main}   


\end{document}